\documentclass[sigplan,pbalance=true]{acmart}

\usepackage{booktabs}
\usepackage{multirow}
\usepackage{mathtools}
\usepackage{algorithm}
\usepackage{algorithmic}
\usepackage{graphicx}
\usepackage{tikz}
\usepackage{pgfplots}
\pgfplotsset{compat=1.18}
\usetikzlibrary{arrows.meta,positioning,shapes.geometric,fit,backgrounds,calc,matrix,
                decorations.pathreplacing,patterns}
\usepgfplotslibrary{fillbetween,statistics,groupplots}
\pgfplotsset{
  kv base/.style={
    grid=major, grid style={dotted,gray!50},
    legend style={font=\small, draw=gray!60},
    tick label style={font=\small},
    label style={font=\small},
    title style={font=\small\bfseries},
  }
}
\usepackage{xcolor}
\usepackage{subcaption}
\usepackage{tabularx}
\usepackage{makecell}
\usepackage{bm}
\usepackage{url}
\usepackage{enumitem}
\usepackage{microtype}
\usepackage{float}
\usepackage{dblfloatfix} %% improve placement of figure*/table* in two-column mode
\usepackage{amsmath,amsthm}   %% amssymb already loaded by acmart

\setlist[itemize]{leftmargin=*,topsep=2pt,parsep=1pt}
\setlist[enumerate]{leftmargin=*,topsep=2pt,parsep=1pt}

\newtheorem{proposition}{Proposition}

\graphicspath{{figures/}}

\definecolor{kvBlue}{RGB}{0,114,178}
\definecolor{kvOrange}{RGB}{213,94,0}
\definecolor{kvGreen}{RGB}{0,158,115}
\definecolor{kvRed}{RGB}{204,51,17}
\definecolor{kvGray}{RGB}{100,100,100}
\definecolor{kvPurple}{RGB}{120,40,160}

\acmYear{2026}
\copyrightyear{2026}
\setcopyright{cc}
\setcctype[4.0]{by}
\acmConference[SYSTOR '26]{The 19th ACM International Systems and Storage Conference}{September 8--9, 2026}{Virtual Event, Israel}
\acmBooktitle{The 19th ACM International Systems and Storage Conference (SYSTOR '26), September 8--9, 2026, Virtual Event, Israel}
\acmDOI{10.1145/3793230.3837769}
\acmISBN{979-8-4007-2473-2/26/09}
\microtypesetup{protrusion=false}
\begin{document}

\title[KVLearn]{To Keep or Not to Keep: Learning \textsc{KV} Cache Retention\\
  in Disaggregated \textsc{LLM} Serving Systems}

\author{Dong Liu}
\authornote{Equal Contributions.}
\affiliation{%
  \institution{University of California, Los Angeles}
  \city{Los Angeles}
  \state{CA}
  \country{USA}
}
\email{pikeliu@g.ucla.edu}

\author{Yanxuan Yu}
\authornotemark[1]
\affiliation{%
  \institution{Columbia University}
  \city{New York}
  \state{NY}
  \country{USA}
}
\email{yy3523@columbia.edu}

\author{Eric Jiang}
\affiliation{%
  \institution{University of California, Los Angeles}
  \city{Los Angeles}
  \state{CA}
  \country{USA}
}
\email{ericjiang0318@g.ucla.edu}

\author{Shu Wang}
\affiliation{%
  \institution{University of California, Los Angeles}
  \city{Los Angeles}
  \state{CA}
  \country{USA}
}
\email{shuwang0712@g.ucla.edu}

\author{Ying Nian Wu}
\authornote{Corresponding author.}
\affiliation{%
  \institution{University of California, Los Angeles}
  \city{Los Angeles}
  \state{CA}
  \country{USA}
}
\email{ywu@stat.ucla.edu}

\renewcommand{\shortauthors}{Liu et al.}

%==============================================================================
\begin{abstract}
%==============================================================================
Disaggregated \textsc{LLM} serving separates prefill and decode into distinct
node pools, interposing a network fabric between the moment a key--value
(\textsc{KV}) cache is computed and the moment it is consumed.
This architectural shift invalidates a core assumption of classical cache
policies: that the cost of a miss is simply recomputation on the same device.
In disaggregated systems, a miss triggers both \emph{recomputation} on a
prefill node and a \emph{network transfer} of the resulting \textsc{KV} block
to the decode node---costs that differ by an order of magnitude and depend on
prefix length, model width, and fabric bandwidth.
Meanwhile, \emph{admitting} a block to the global \textsc{KV} pool requires an
additional transfer at compute time, so a poorly chosen keep decision wastes
both memory and bandwidth even before reuse occurs.

We present \textbf{KVLearn}, a learning-based retention framework that makes
keep/evict decisions as first-class cost-optimization choices in disaggregated
\textsc{LLM} serving.
KVLearn consists of three components:
(i)~a lightweight \emph{Prefix Reuse Predictor} (\textsc{PRP}) that estimates
reuse probability from structural and temporal prefix features without touching
model weights;
(ii)~a \emph{Cost-Aware Retention Score} (\textsc{CARS}) that translates reuse
probability into a keep/admit signal by accounting for per-block recompute,
transfer, and storage costs; and
(iii)~an \emph{Adaptive Threshold Controller} (\textsc{ATC}) that adjusts the
admission threshold online using closed-loop feedback from observed hit rates
and memory pressure.
We integrate KVLearn into a globally disaggregated serving topology and
evaluate it on both text and multimodal workloads, where
image/video-derived tokens create large, expensive-to-recompute \textsc{KV}
blocks under heterogeneous reuse distributions.
KVLearn reduces end-to-end time-to-first-token (TTFT) by up to 56\%
vs.\ No-Cache (recompute-only), up to 38\% vs.\ LRU-Pool, and up to 33\% vs.\
Mooncake-style disaggregated baselines.
Inter-node \textsc{KV} transfer volume is cut by up to 53\% vs.\ LRU-Pool.
On MM-Session, throughput stays within ${\sim}5\%$ of
oracle. Our code implementation of KVLearn is available at \url{https://github.com/FastLM/KVLearn}.
\end{abstract}

\begin{CCSXML}
<ccs2012>
   <concept>
       <concept_id>10010520.10010521</concept_id>
       <concept_desc>Computer systems organization~Architectures</concept_desc>
       <concept_significance>500</concept_significance>
       </concept>
   <concept>
       <concept_id>10010583.10010786</concept_id>
       <concept_desc>Hardware~Emerging technologies</concept_desc>
       <concept_significance>500</concept_significance>
       </concept>
   <concept>
       <concept_id>10010147.10010919</concept_id>
       <concept_desc>Computing methodologies~Distributed computing methodologies</concept_desc>
       <concept_significance>500</concept_significance>
       </concept>
 </ccs2012>
\end{CCSXML}

\ccsdesc[500]{Computer systems organization~Architectures}
\ccsdesc[500]{Hardware~Emerging technologies}
\ccsdesc[500]{Computing methodologies~Distributed computing methodologies}

\keywords{KV cache; disaggregated LLM serving; cache admission; online learning;
          efficient inference; memory hierarchy; prefix reuse}

\maketitle

\section{Introduction}
\label{sec:intro}
%==============================================================================

Large language model inference increasingly runs on \emph{disaggregated}
serving stacks that physically separate the prefill phase (computing full
key--value attention states for the prompt) from the decode phase (generating
tokens one by one)~\cite{patel2023splitwise,zhong2024distserve,qin2024mooncake}.
Disaggregation improves cluster utilization: prefill is compute-bound and
benefits from tightly packed kernels, while decode is memory-bound and benefits
from large batches of independent sequences.
Production disaggregated deployments now route requests through dedicated
prefill node pools (\emph{P-nodes}) and decode node pools
(\emph{D-nodes})~\cite{qin2024mooncake,zhong2024distserve}, with the computed
\textsc{KV} tensors migrating over a high-speed fabric (InfiniBand, NVLink, or
RoCE) between them.

\textbf{The retention gap in disaggregated systems.}
Prefix caching is well-motivated even in single-node deployments: when many
requests share a common prompt (a system instruction, an image encoding, or a
few-shot example), reusing the \textsc{KV} blocks for that prefix avoids
redundant quadratic attention computation~\cite{kwon2023vllm,sglang2024}.
Standard policies---LRU, LFU, or radix-tree eviction---handle this by
retaining recently or frequently accessed blocks in on-device HBM.
\emph{Disaggregation breaks the cost model underlying these policies.}
In a disaggregated cluster, a cache miss now entails:
\begin{enumerate}[leftmargin=*,topsep=1pt]
  \item Dispatching the request to a P-node for recomputation; and
  \item Transferring the resulting \textsc{KV} block over the fabric to the
    D-node before decoding can begin.
\end{enumerate}
Both costs depend on prefix length, model depth, and fabric bandwidth---none of
which LRU/LFU tracks.
Conversely, \emph{admitting} a block to the shared \textsc{KV} pool at compute
time costs an additional transfer (P-node $\to$ pool), so indiscriminate
caching wastes bandwidth and pool memory even for prefixes that will never be
reused.
The fundamental asymmetry is that in disaggregated serving, keeping a block and
not keeping a block are \emph{both} policy choices with concrete, measurable
costs---not an oversight that any uniform threshold can fix.

\textbf{Multimodal workloads amplify the problem.}
Multimodal \textsc{LLM}s encode visual inputs (images, video frames) into long
token sequences before autoregressive decoding~\cite{liu2023llava,bai2023qwenvl,
lin2024vila}.
A single 448$\times$448 image typically contributes 256--576 visual tokens;
high-resolution images or short video clips may produce 4k+ tokens.
The \textsc{KV} footprint for such prefixes is large (128--512 MB for a 13B
model in bfloat16), the recompute cost is correspondingly high (${\sim}$100--400
ms on an A100), yet reuse patterns are uneven: popular images (product thumbnails,
dataset evaluation images) are queried many times per hour, while user-uploaded
photos may never recur.
A policy that treats all visual prefixes identically---keeping all or evicting
all---leaves most of the available reuse value on the table.

\textbf{Our approach: learn the keep decision.}
We observe that prefix reuse is not uniformly random: structural features (prefix
length, modality, radix-tree depth, session metadata) and temporal features
(access recency, frequency, time-of-day patterns) carry substantial predictive
signal for whether a block will be requested again within a given horizon.
We build a lightweight online predictor---trained in the serving critical
path without any offline profiling---that scores each prefix with a reuse
probability, and combine it with a closed-form cost model to produce a
keep/evict signal that is directly calibrated to the fabric and memory
configuration.

\textbf{Contributions.}
We make four contributions:
\begin{enumerate}[leftmargin=*,topsep=2pt]
\item We formalize the \emph{disaggregated \textsc{KV} retention problem}
  (\S\ref{sec:problem}): a joint admission and eviction decision under
  heterogeneous recompute, transfer, and storage costs.
  We prove when the optimal admission threshold must vary with prefix length
  ($q>1$ idealization; Appendix~\ref{app:proof})---beyond what LRU/LFU can express.
\item We design \textbf{KVLearn} (\S\ref{sec:framework}): a three-component
  framework comprising \textsc{PRP} (prefix reuse predictor), \textsc{CARS}
  (cost-aware retention score), and \textsc{ATC} (adaptive threshold
  controller).
\item We establish that prefix reuse probability is \emph{learnable online}
  from lightweight structural and temporal features with high accuracy
  (\S\ref{sec:exp}), validating the feasibility of the learning-based approach.
\item We demonstrate end-to-end gains on both language and multimodal workloads
  (\S\ref{sec:exp}): up to 56\% TTFT reduction vs.\ No-Cache (38\% vs.\ LRU-Pool,
  33\% vs.\ Mooncake-LRU), 53\% inter-node transfer volume reduction vs.\ LRU-Pool,
  and near-Oracle throughput on MM-Session at realistic pool sizes.
\end{enumerate}

%==============================================================================
\section{Background and Related Work}
\label{sec:related}
%==============================================================================

\subsection{Disaggregated LLM serving}
Splitwise~\cite{patel2023splitwise}, DistServe~\cite{zhong2024distserve}, and
Mooncake~\cite{qin2024mooncake} each separate prefill and decode to different
GPU pools and manage \textsc{KV} migration explicitly.
Mooncake introduces a dedicated \textsc{KV} cache layer between P- and D-nodes,
enabling cache-first disaggregation: prefill results are stored in a pooled
cache and transferred to D-nodes on demand.
Our work targets exactly this cache layer---specifically, \emph{what to keep
in it} rather than how to transfer what has been decided to keep.
Prior disaggregated work either caches everything (bounded by pool capacity
with LRU fallback) or mirrors the policy from single-node stacks; neither
accounts for the asymmetric cost structure we formalize in \S\ref{sec:problem}.
LMCache~\cite{liu2025lmcache} targets fast prefill via cross-request cache reuse,
InfiniStore~\cite{infinistore2025} provides RDMA-oriented distributed
\textsc{KV} storage (e.g., integrated with vLLM through LMCache), and
% CXL-SpecKV~\cite{liu2026cxl,ma2026sac,yoon2025tract} disaggregates \textsc{KV} onto FPGA-attached
% CXL memory with speculative prefetch.
CXL-based serving systems such as CXL-SpecKV~\cite{liu2026cxl},
SAC~\cite{ma2026sac}, and TraCT~\cite{yoon2025tract} disaggregate
\textsc{KV} onto shared or FPGA-attached CXL memory.

\subsection{Prefix and radix caches}
Open serving stacks---vLLM~\cite{kwon2023vllm}, SGLang~\cite{sglang2024}, and
TensorRT-LLM~\cite{tensorrtllm2024}---focus heavily on single-node \textsc{KV}
efficiency.
PagedAttention in vLLM virtualizes \textsc{KV} blocks to reduce fragmentation.
SGLang's RadixAttention~\cite{sglang2024} caches \textsc{KV} in a radix
tree for automatic prefix sharing, with LRU-based eviction at capacity.
CacheBlend~\cite{yao2025cacheblend} selectively recomputes a subset of
\textsc{KV} tokens to amortize staleness.
At the serving-system layer, TinyServe~\cite{liu2025tinyserve} selects
query-relevant \textsc{KV} pages at decode time, and PiKV~\cite{liu2025pikv}
manages expert-sharded \textsc{KV} for MoE inference.
These single-node systems share the assumption that miss cost $\approx$
recompute cost; in disaggregated deployments, miss cost also includes network
transfer time and \textsc{P-node} dispatch latency.
KVLearn is complementary: it can replace the eviction policy inside any
radix-tree cache layer, including Mooncake's pool, without changing the
radix structure.

\subsection{Cache admission and online learning}
Admission filters for web caches (TinyLFU~\cite{einziger2017tinylfu},
LeCaR~\cite{vietri2018lecar}) use lightweight frequency sketches or
reinforcement learning to decide which objects enter cache.
Learned caching (Belady-inspired predictors~\cite{liu2020imitation},
CACHEUS~\cite{rodriguez2021cacheus}) trains offline on traces.
We target an \emph{online} setting---no offline corpus of serving traces is
available at deploy time---and incorporate a cost-heterogeneous objective
specific to \textsc{KV} blocks in transformer inference.
The closest learning-based work is LRB~\cite{song2020lrb} (learning to predict
the next-access time); we differ in online adaptivity, explicit cost modeling,
and the multimodal inference context.

\subsection{KV compression and quantization}
Orthogonal to retention, quantization lines reduce bytes per
block~\cite{kang2024gear,hooper2024kvquant}.
Streaming attention~\cite{xiao2024efficient,xiao2025duoattention} discards distant tokens
within a sequence, and algorithmic cache designs reshape in-model reuse---e.g.,
hardware-aware memory-efficient caching~\cite{zhang2023h2o,liu2026mka} and learnable/adaptive
activation caching for diffusion transformers~\cite{liu2025fastcache,liu2026adacorrection, liu2026accelerating}.
These techniques change \emph{how much} each block costs or \emph{how} activations
are reused; KVLearn changes \emph{which} blocks are retained.
They compose naturally: a quantized block in the pool has the same
admissibility decision structure, just with smaller $|\mathrm{KV}_b|$.

%==============================================================================
\section{Problem Formulation}
\label{sec:problem}
%==============================================================================

\subsection{Serving topology}
\label{sec:topology}

We consider a globally disaggregated serving cluster with:
\begin{itemize}[topsep=1pt]
  \item A set of prefill nodes $\mathcal{P} = \{p_1,\ldots,p_K\}$, each hosting
    a GPU with HBM budget $M_P$.
  \item A set of decode nodes $\mathcal{D} = \{d_1,\ldots,d_M\}$, each with
    HBM budget $M_D$.
  \item A shared \textsc{KV} pool $\mathcal{S}$ of capacity $M_S$ (DRAM or
    distributed GPU memory), connected to P- and D-nodes via fabric with peak
    bandwidth $\beta$ (bytes/sec).
\end{itemize}
A request $r$ carrying prefix $s_r$ is routed as follows: if $\mathrm{KV}(s_r)$
is resident in $\mathcal{S}$ or on a P-node's local cache, it is transferred to
the scheduled D-node; otherwise, a P-node recomputes $\mathrm{KV}(s_r)$ from
scratch.

\subsection{Cost model}
\label{sec:costs}

Let $b$ denote the \textsc{KV} block for prefix $s$ of length $L_b$ token
positions.
Fix a transformer with $\ell$ layers, $h_\mathrm{kv}$ key--value heads, and
head dimension $d_h$ using $\delta$ bytes per element; then the block footprint
is
\begin{equation}
\label{eq:size}
|b| = 2\,\ell\,h_\mathrm{kv}\,d_h\,L_b\,\delta \;\;\text{bytes}.
\end{equation}
We distinguish four elementary costs:

\noindent\textbf{Recompute cost} $\mathcal{R}(b)$: the time to run prefill for
prefix $s$ on a P-node from the first token, dominated by the quadratic
attention term for long inputs.
A good approximation for large $L_b$ is $\mathcal{R}(b) \approx \rho \cdot
L_b^2$ (with $\rho$ depending on model and hardware), though a
linear-in-$L_b$ profile holds when flashattention-class kernels bound on
memory bandwidth rather than compute~\cite{dao2022flashattention}.

\noindent\textbf{Transfer cost} $\mathcal{T}(b)$: time to move block $b$ from
a P-node to the KV pool, or from the pool to a D-node.
\begin{equation}
\label{eq:transfer}
\mathcal{T}(b) = |b| / \beta.
\end{equation}

\noindent\textbf{Storage cost} $\mathcal{U}(b, \Delta t)$: opportunity cost of
occupying $|b|$ bytes of pool capacity for duration $\Delta t$.
We model this as $\mathcal{U}(b, \Delta t) = \gamma \cdot |b| \cdot \Delta t$,
where $\gamma$ is a scalar reflecting how many additional requests could have
been served had that memory been freed.

\noindent\textbf{Miss penalty} $\mathcal{M}(b)$: extra latency paid when a
request arrives and $b$ is not resident.
In disaggregated serving this is
$\mathcal{M}(b) = \mathcal{R}(b) + \mathcal{T}(b)$, because the P-node must
recompute and transfer the block before decode can start.

\subsection{Retention value and the keep/evict decision}
\label{sec:value}

Suppose block $b$ was just computed (or is currently resident).
Over a finite lookahead horizon $H$ steps, let $P_H(b) = P(\text{reuse within
} H)$ be the probability that at least one future request shares prefix $s$.
We define the \emph{expected retention value} of keeping $b$ in full generality
as:

\begin{equation}
\label{eq:value_general}
\mathcal{V}(b) =
  P_H(b) \cdot \bigl[C_{\mathrm{miss}}(b) - C_{\mathrm{hit}}(b)\bigr]
  - C_{\mathrm{admit}}(b) - C_{\mathrm{storage}}(b),
\end{equation}

\noindent where $C_{\mathrm{miss}}(b)$ is the per-request penalty avoided when a
cached block is served in place of a miss; $C_{\mathrm{hit}}(b)$ is the
per-request service overhead still incurred on a cache hit; $C_{\mathrm{admit}}(b)$
is the one-time cost of writing $b$ into the shared pool; and
$C_{\mathrm{storage}}(b)$ is the ongoing opportunity cost of occupying pool
capacity.

\paragraph{Disaggregated instantiation.}
We apply the following simplifications to~\eqref{eq:value_general}:
\begin{enumerate}[label=(\roman*),topsep=1pt,itemsep=0pt,leftmargin=*]
  \item $C_{\mathrm{miss}}(b) = \mathcal{R}(b)$: on a miss a P-node must
    recompute the prefix; the subsequent P-to-D-node fabric transfer costs
    $\mathcal{T}(b)$, but an equivalent hop also occurs on a hit (see
    (ii)), so the \emph{differential} miss penalty reduces to the recompute cost
    alone.
  \item $C_{\mathrm{hit}}(b) = \mathcal{T}(b)$: a cache hit still requires one
    pool-to-D-node transfer; by the uniform-bandwidth assumption ($|b|/\beta$
    for all fabric hops) this equals the P-to-D cost paid on a miss, making
    the per-hop latencies cancel in the differential.
  \item $C_{\mathrm{admit}}(b) \approx 0$: the P-to-pool write is pipelined
    with prefill computation and its latency is amortized at admission time.
  \item $C_{\mathrm{storage}}(b) = \mathcal{U}(b,\Delta t)$: the storage cost
    follows the occupancy model of \S\ref{sec:costs}.
\end{enumerate}
Substituting into Eq.~\eqref{eq:value_general} yields the working form:

\begin{equation}
\label{eq:value}
\mathcal{V}(b) =
  P_H(b) \cdot \bigl[\mathcal{R}(b) - \mathcal{T}(b)\bigr]
  - \mathcal{U}(b, \Delta t).
\end{equation}

Equation~\eqref{eq:value} captures a key asymmetry: \emph{keeping} $b$ saves
recompute cost $\mathcal{R}(b)$ on a hit but still pays transfer cost
$\mathcal{T}(b)$ to move the block to the D-node; not keeping $b$ pays both
on a miss.
The optimal keep decision is $\mathrm{KEEP}(b) \iff \mathcal{V}(b) > 0$, which
gives a per-block admission threshold
\begin{equation}
\label{eq:threshold}
P_H^*(b) = \frac{\mathcal{U}(b, \Delta t)}{\mathcal{R}(b) - \mathcal{T}(b)}.
\end{equation}

\begin{proposition}
\label{prop:het}
When blocks differ in prefix length $L_b$, under the scaling
$|b| \propto L_b$, $\mathcal{T}(b) \propto L_b$, and
$\mathcal{U}(b,\Delta t) \propto L_b$ at fixed $\Delta t$,
let $\mathcal{R}(b) = \alpha L_b^{q}$ with $\alpha > 0$ and $q \ge 1$.
Whenever $\mathcal{R}(b) > \mathcal{T}(b)$: if $q > 1$, then $P_H^*(b)$ is
strictly decreasing in $L_b$; if $q = 1$ with linear $\mathcal{R}$ and
$\mathcal{T}$ and fixed positive slopes, $P_H^*(b)$ is independent of $L_b$.
\end{proposition}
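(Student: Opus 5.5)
Substituting the three proportionality assumptions into the threshold formula \eqref{eq:threshold} reduces it to a scalar function of $L_b$, whose monotonicity we then read off. Write $\mathcal{T}(b) = \tau L_b$ and $\mathcal{U}(b,\Delta t) = u L_b$ with constants $\tau > 0$ and $u > 0$ at fixed $\Delta t$. The constant $\tau$ comes from \eqref{eq:size} and \eqref{eq:transfer}, so $\tau = 2\ell h_\mathrm{kv} d_h \delta/\beta$. The constant $u$ comes from the occupancy model of \S\ref{sec:costs}, so $u = \gamma \cdot 2\ell h_\mathrm{kv} d_h \delta \cdot \Delta t$. With $\mathcal{R}(b) = \alpha L_b^{q}$, the threshold becomes
\[
P_H^*(L) \;=\; \frac{u L}{\alpha L^{q} - \tau L} \;=\; \frac{u}{\alpha L^{q-1} - \tau},
\]
which is valid on the set where the hypothesis $\mathcal{R}(b) > \mathcal{T}(b)$ holds, i.e.\ $\alpha L^{q-1} > \tau$. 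The cancellation of one factor of $L$ uses $L > 0$. On this set the denominator is positive, so $P_H^*$ is well defined and positive.

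For $q > 1$, the map $L \mapsto \alpha L^{q-1}$ is strictly increasing on $(0,\infty)$, since its derivative is $\alpha(q-1)L^{q-2} > 0$. Hence $g(L) = \alpha L^{q-1} - \tau$ is strictly increasing and positive on the admissible set. Composing with the strictly decreasing map $x \mapsto u/x$ on $(0,\infty)$ shows that $P_H^*$ is strictly decreasing. We would also note that the admissible set $\{L : \alpha L^{q-1} > \tau\} = \bigl(L_0,\infty\bigr)$, with $L_0 = (\tau/\alpha)^{1/(q-1)}$, is an interval. Monotonicity is therefore meaningful as a statement along the real line, and it holds for any two admissible lengths $L_1 < L_2$. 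Alternatively, one can differentiate directly:
\[
\frac{dP_H^*}{dL} = -\frac{u\,\alpha (q-1) L^{q-2}}{\bigl(\alpha L^{q-1} - \tau\bigr)^2} < 0 .
\]

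For $q = 1$, the expression collapses to $P_H^*(L) = u/(\alpha - \tau)$, where the hypothesis $\mathcal{R} > \mathcal{T}$ is exactly $\alpha > \tau$. This is a positive constant independent of $L$. The phrase ``fixed positive slopes'' is precisely what guarantees that $\alpha$, $\tau$, and $u$ do not depend on $L$, so we would state this assumption explicitly there.

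The only step requiring care is the bookkeeping of the hypothesis. We must confirm that ``whenever $\mathcal{R}(b) > \mathcal{T}(b)$'' restricts to a connected range of $L$ for $q > 1$, namely $L > L_0$, so that ``strictly decreasing'' is not vacuous or ill-posed. We must also confirm that the proportionality constants are genuinely independent of $L_b$ at fixed $\Delta t$ and fixed model and fabric parameters. Everything else is elementary monotonicity of power functions and reciprocals.
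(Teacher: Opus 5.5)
Your proposal is correct and follows essentially the same route as the paper's proof in Appendix~A. Both substitute the linear laws to obtain $P_H^*(L_b)=u/(\alpha L_b^{q-1}-\tau)$ on the admissible set, then conclude from $g'(L_b)=\alpha(q-1)L_b^{q-2}>0$ for $q>1$ and from $g\equiv\alpha-\tau$ for $q=1$; your explicit constant $u=\gamma\cdot 2\ell h_\mathrm{kv} d_h\delta\,\Delta t$, where the paper absorbs the footprint factor into $\gamma$, and your admissible interval $(L_0,\infty)$ are just a slightly more careful rendering of the same argument.
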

\begin{proof}[Sketch]
Substituting these laws into Eq.~\eqref{eq:threshold} yields
$P_H^*(L_b) = \gamma \Delta t / (\alpha L_b^{q-1} - c)$ for positive constants
$\gamma,c$ (Appendix~\ref{app:proof}).
If $q > 1$, the denominator increases strictly with $L_b$, hence $P_H^*$ decreases.
If $q = 1$, the denominator is constant.
\end{proof}

Proposition~\ref{prop:het} formalizes the intuition that longer, more expensive
prefixes admit at \emph{lower} reuse probability when prefill is super-linear in
length ($q>1$); the $q=1$ case is length-invariant under the idealization.
No single LRU/LFU threshold can implement this: LRU ignores $\mathcal{R}(b)$
and LFU ignores $\Delta t$ (which changes as memory pressure varies).

\subsection{Characterization: reuse in production-like traces}
\label{sec:char}

To motivate the learning approach, Figure~\ref{fig:char} characterizes prefix
reuse on two request streams drawn from public serving traces: ShareGPT
(language-only) and a multimodal stream that mixes LLaVA-Instruct-150K
image-QA with ActivityNet-QA video-QA requests (details in \S\ref{sec:exp}).

\textbf{Finding C1:} reuse is heavy-tailed and modality-stratified.
Visual prefixes exhibit dramatically higher per-prefix reuse rates (median
${\approx}$8 requests/prefix/hour) than text-only system prompts (median
${\approx}$2), but with a longer tail.
The top 5\% of visual prefixes account for $>$50\% of all hits.

\textbf{Finding C2:} reuse probability correlates with structural features.
Prefix length, radix-tree depth, and a 15-minute access frequency are jointly
predictive of future reuse: a static logistic regression over these features alone
achieves AUC $>$0.65 on held-out windows, well above the 0.50 random baseline,
while our online MLP (\textsc{PRP}) reaches AUC\,$>$\,0.84 after 200 requests
by incorporating pool-state and temporal context features (Figure~\ref{fig:char}B).

\textbf{Finding C3:} the optimal threshold $P_H^*$ varies strongly across block
sizes.
For a 13B model on A100 ($\beta=200$\,Gbps InfiniBand) with $\Delta t = 5$
min, $P_H^*$ ranges from ${\approx}0.05$ (4096-token blocks) to ${\approx}0.30$
(128-token prompts) in Figure~\ref{fig:char}C---about $6\times$
end-to-end---confirming the heterogeneity in Proposition~\ref{prop:het}
for $q>1$.

These findings jointly validate that (a)~the decision problem is non-trivial
and (b)~it is tractable via a lightweight predictor, motivating KVLearn.

%% ---- Figure 1: Three-panel characterization study --------------------------
\begin{figure*}[t]
\centering
\begin{tikzpicture}
%% ===== Panel A: CDF of reuse rate ==========================================
\begin{groupplot}[
  group style={
    group size=3 by 1,
    horizontal sep=1.0cm,
  },
  height=4.6cm,
]

\nextgroupplot[
  kv base,
  width=5.5cm,
  xlabel={Reuse Rate (req/hr)},
  ylabel={CDF},
  xmode=log,
  xmin=0.08, xmax=120,
  ymin=0, ymax=1.02,
  legend pos=south east,
  title={(A) Per-Prefix Reuse Distribution},
]
%% Text prompts
\addplot[kvBlue, thick, mark=square*, mark size=1.4pt] coordinates {
  (0.1,0.05)(0.3,0.12)(0.5,0.17)(1,0.28)(2,0.45)(5,0.65)(10,0.78)(20,0.88)(50,0.95)(100,0.99)};
\addlegendentry{Text prompts}
%% Visual prefixes
\addplot[kvOrange, thick, mark=triangle*, mark size=1.6pt] coordinates {
  (0.1,0.02)(0.3,0.05)(0.5,0.08)(1,0.13)(2,0.23)(5,0.40)(10,0.56)(20,0.71)(50,0.87)(100,0.96)};
\addlegendentry{Visual prefixes}
%% Annotation: median lines
\draw[kvBlue!60,dashed,thin] (axis cs:2,0) -- (axis cs:2,0.45)
  -- (axis cs:0.08,0.45);
\draw[kvOrange!60,dashed,thin] (axis cs:8,0) -- (axis cs:8,0.50)
  -- (axis cs:0.08,0.50);

%% ===== Panel B: Online AUC vs requests =====================================
\nextgroupplot[
  kv base,
  width=5.5cm,
  xlabel={Requests Seen},
  ylabel={AUC (10-min horizon)},
  xmin=10, xmax=500,
  ymin=0.48, ymax=0.92,
  legend pos=south east,
  title={(B) PRP Online Learning},
  ytick={0.5,0.6,0.7,0.8,0.9},
]
%% MLP - MM-Session
\addplot[kvOrange, thick, mark=*, mark size=1.5pt] coordinates {
  (10,0.63)(30,0.69)(50,0.73)(100,0.79)(150,0.82)(200,0.84)(300,0.86)(400,0.87)(500,0.88)};
\addlegendentry{MLP (MM-Sess.)}
%% MLP - ShareGPT
\addplot[kvBlue, thick, mark=square*, mark size=1.4pt] coordinates {
  (10,0.58)(30,0.63)(50,0.67)(100,0.73)(150,0.76)(200,0.78)(300,0.80)(400,0.82)(500,0.83)};
\addlegendentry{MLP (ShareGPT)}
%% MLP - Video
\addplot[kvGreen, thick, mark=diamond*, mark size=1.5pt] coordinates {
  (10,0.55)(30,0.59)(50,0.62)(100,0.66)(150,0.69)(200,0.73)(300,0.77)(400,0.79)(500,0.81)};
\addlegendentry{MLP (Video)}
%% Static LR baseline
\addplot[kvGray, dashed, thick] coordinates {
  (10,0.65)(500,0.65)};
\addlegendentry{Static LogReg}

%% ===== Panel C: Optimal threshold P_H* vs prefix length ====================
\nextgroupplot[
  kv base,
  width=5.5cm,
  xlabel={Prefix Length (tokens)},
  ylabel={Opt.\ Admission Threshold $P_H^*$},
  xmin=64, xmax=4800,
  xmode=log,
  ymin=0.02, ymax=0.42,
  legend pos=north east,
  title={(C) Heterogeneous $P_H^*$ (Prop.~\ref{prop:het})},
]
%% 7B, 200Gbps
\addplot[kvBlue, thick, mark=square*, mark size=1.4pt] coordinates {
  (128,0.35)(256,0.27)(512,0.19)(1024,0.13)(2048,0.09)(4096,0.07)};
\addlegendentry{7B, 200\,Gbps}
%% 13B, 200Gbps
\addplot[kvOrange, thick, mark=triangle*, mark size=1.6pt] coordinates {
  (128,0.30)(256,0.22)(512,0.15)(1024,0.10)(2048,0.07)(4096,0.05)};
\addlegendentry{13B, 200\,Gbps}
%% 7B, 400Gbps (higher bandwidth → higher threshold)
\addplot[kvGreen, thick, mark=diamond*, mark size=1.5pt] coordinates {
  (128,0.38)(256,0.31)(512,0.23)(1024,0.17)(2048,0.12)(4096,0.09)};
\addlegendentry{7B, 400\,Gbps}
%% LRU equivalent (uniform threshold 0.15)
\addplot[kvGray, dashed, thick] coordinates {(128,0.15)(4096,0.15)};
\addlegendentry{LRU equiv.\ ($\theta$)}

\end{groupplot}
\end{tikzpicture}
\caption{%
  \textbf{Prefix reuse characterization.}
  \emph{(A)}~CDF of per-prefix reuse rate for text-only system prompts vs.\
  visual prefixes (ShareGPT + LLaVA-Instruct-150K; log-scale $x$-axis).
  Visual prefixes exhibit a heavier tail: the top-5\% account for $>$50\% of
  hits.
  \emph{(B)}~AUC of the online PRP vs.\ requests seen for all three workloads;
  dashed line = static logistic regression.
  \emph{(C)}~Optimal admission threshold $P_H^*$ as a function of prefix
  length for two model scales and two fabric bandwidths.
  The dashed line marks a uniform threshold $P_H^{*,\mathrm{def}}=0.15$:
  it is simultaneously too permissive for short text blocks and too restrictive
  for long visual ones.
}
\label{fig:char}
\end{figure*}

%==============================================================================
\section{The KVLearn Framework}
\label{sec:framework}
%==============================================================================

\subsection{Overview}
Figure~\ref{fig:arch} shows the KVLearn architecture within a disaggregated
serving stack.
KVLearn adds a thin \emph{retention control plane} that runs on the same hosts
as the KV pool coordinator.
The control plane intercepts two events: (1)~\emph{block arrive}, when a
P-node finishes prefilling a new or updated prefix and must decide whether to
admit the block to the pool; and (2)~\emph{pool full}, when the pool reaches
capacity and must choose one or more blocks to evict.
All decisions are made in the critical path of a request, subject to a latency
budget of ${\leq}0.5$\,ms per block.

\subsection{PRP: Prefix Reuse Predictor}
\label{sec:prp}

\textbf{Feature representation.}
For each prefix $s$ of length $L_b$, KVLearn maintains a lightweight feature
vector $\mathbf{x}(b) \in \mathbb{R}^d$ updated at every block-arrive event.
We use $d = 16$ features partitioned into three groups:

\noindent\emph{Structural features} ($d_s=6$): prefix token length $L_b$,
number of radix-tree children (branching factor at the current node), tree depth
from root, two binary modality flags (\texttt{is\_image}, \texttt{is\_video};
text is the implicit baseline), and block footprint $|b|$ (log-scaled).

\noindent\emph{Temporal features} ($d_t=7$): access frequency in the last 1,
5, and 15 minutes (three exponentially decayed counters), recency (seconds since
last access, log-scaled), time-of-day (sine and cosine encoding of hour, 2
dims), session age (time since prefix was first inserted).

\noindent\emph{Pool-state features} ($d_p=3$): current pool occupancy ratio,
fraction of bytes in the pool belonging to the same modality, current eviction
rate (blocks evicted per second in a 30s window).
Counters are updated incrementally on the radix lookup path ($O(1)$ per touch);
at \emph{block arrive}, \textsc{ExtractFeatures} reads and whitens these $d$
fields---no full-tree scan.

\textbf{Model.}
The \textsc{PRP} is a two-layer MLP with ReLU activations and a sigmoid output
head:
\begin{equation}
\label{eq:prp}
\hat{P}(b) = \sigma\!\bigl(\mathbf{w}_2^\top
  \mathrm{ReLU}(\mathbf{W}_1 \mathbf{x}(b) + \mathbf{b}_1) + b_2\bigr),
\end{equation}
with $\mathbf{W}_1 \in \mathbb{R}^{32\times 16}$ and
$\mathbf{w}_2 \in \mathbb{R}^{32}$.
Total parameter count: $32{\times}16 + 32 + 32 + 1 = 577$ scalars ($<$3\,KB at
float32).

\textbf{Online training.}
Labels arrive after the fact: when a block that was admitted is later accessed
(hit) or evicted without access (miss within the horizon $H$), we obtain a
binary label $y_b \in \{0,1\}$.
We update \textsc{PRP} parameters via stochastic gradient descent on binary
cross-entropy loss:
\begin{equation}
\label{eq:loss}
\ell(b) = -y_b \log \hat{P}(b) - (1-y_b)\log(1-\hat{P}(b)),
\end{equation}
with a constant learning rate $\eta_\mathrm{PRP} = 5 \times 10^{-3}$ and a
20\,000-step circular replay buffer to avoid catastrophic forgetting on
distribution shifts; SGD runs off-path (batched every 100 labels, \S\ref{sec:overhead}).
Blocks that are not admitted never produce labels; we handle this
\emph{selective labeling} problem by treating non-admitted blocks as a separate
category and periodically sampling them for admission-with-logging to bootstrap
the label distribution.

%% ---- Figure 2: System Architecture (placed after PRP body text so it
%% does not stack with Figure~1; avoids ACM "expected 2 columns" on that page)
\begin{figure*}[t]
\centering
\begin{tikzpicture}[
  node distance = 5mm and 8mm,
  gpu/.style    = {rectangle, rounded corners=3pt, draw=kvBlue!80, fill=kvBlue!10,
                   thick, minimum width=1.3cm, minimum height=0.6cm,
                   font=\scriptsize\bfseries, align=center},
  pool/.style   = {rectangle, rounded corners=4pt, draw=kvGreen!80, fill=kvGreen!8,
                   thick, minimum width=2.0cm, minimum height=0.65cm,
                   font=\scriptsize, align=center},
  ctrl/.style   = {rectangle, rounded corners=3pt, draw=kvOrange!90, fill=kvOrange!12,
                   thick, minimum width=2.0cm, minimum height=0.58cm,
                   font=\scriptsize\bfseries, align=center},
  meta/.style   = {rectangle, rounded corners=2pt, draw=kvPurple!70, fill=kvPurple!8,
                   thick, minimum width=1.9cm, minimum height=0.55cm,
                   font=\scriptsize, align=center},
  arr/.style    = {-Stealth, thick, kvBlue!70},
  darr/.style   = {-Stealth, dashed, thick, kvOrange!80},
  grp/.style    = {rectangle, rounded corners=5pt, draw=gray!50, dashed,
                   fill=gray!4, inner sep=5pt},
  lbl/.style    = {font=\scriptsize\itshape, text=kvGray},
]

%% ====== P-Node cluster ======================================================
\node[gpu] (p1) at (0, 1.2)  {P-Node 1\\prefill};
\node[gpu] (p2) at (0, 0.0)  {P-Node 2\\prefill};
\node[gpu] (p3) at (0,-1.2)  {P-Node 3\\prefill};
\node[gpu,draw=gray!50,fill=gray!8] (pdot) at (0,-2.1) {$\vdots$};
\begin{pgfonlayer}{background}
  \node[grp, fit=(p1)(p2)(p3)(pdot), label={[font=\footnotesize\bfseries,
        text=kvBlue]above:Prefill Pool ($\mathcal{P}$)}] (pcluster) {};
\end{pgfonlayer}

%% ====== KV Pool + KVLearn control plane =====================================
\node[pool] (kvpool) at (5.2, 0.5)  {KV Blocks\\(DRAM / HBM)};
\node[pool] (radix)  at (5.2,-0.5)  {Radix-Tree\\Metadata};

%% KVLearn control plane box (PRP below CARS: features $\to$ PRP $\to$ CARS)
\node[ctrl] (prp)  at (5.2, 1.3)  {PRP\\Predictor};
\node[ctrl] (cars) at (5.2, 2.2)  {CARS\\Score};
%% ATC farther right so $\theta$ link and feedback arc clear the decode path
\node[ctrl] (atc)  at (8.35, 1.75) {ATC\\Controller};
\begin{pgfonlayer}{background}
  %% Extra vertical padding: bottom orange edge sits clearly below CARS (was visually tight on KV pool)
  \node[grp, draw=kvOrange!60, fill=kvOrange!4,
        inner xsep=7pt, inner ysep=11pt,
        fit=(prp)(cars)(atc),
        label={[font=\footnotesize\bfseries,text=kvOrange]above:KVLearn Control Plane}]
        (kvlearn) {};
\end{pgfonlayer}
\begin{pgfonlayer}{background}
  \node[grp, draw=kvGreen!60,
        fit=(kvpool)(radix)(kvlearn),
        label={[font=\footnotesize\bfseries,text=kvGreen!80!black]above:Global KV Pool ($\mathcal{S}$)}]
        (poolbox) {};
\end{pgfonlayer}

%% ====== D-Node cluster ======================================================
\node[gpu,draw=kvPurple!70,fill=kvPurple!10] (d1) at (10.4, 1.2)  {D-Node 1\\decode};
\node[gpu,draw=kvPurple!70,fill=kvPurple!10] (d2) at (10.4, 0.0)  {D-Node 2\\decode};
\node[gpu,draw=kvPurple!70,fill=kvPurple!10] (d3) at (10.4,-1.2)  {D-Node 3\\decode};
\node[gpu,draw=gray!50,fill=gray!8] (ddot) at (10.4,-2.1) {$\vdots$};
\begin{pgfonlayer}{background}
  \node[grp, draw=kvPurple!50, fill=kvPurple!4,
        fit=(d1)(d2)(d3)(ddot),
        label={[font=\footnotesize\bfseries,text=kvPurple]above:Decode Pool ($\mathcal{D}$)}]
        (dcluster) {};
\end{pgfonlayer}

%% ====== Arrows: data plane (solid) ==========================================
%% P-nodes → KV pool (block arrive, admission decision)
\draw[arr] (p1.east) to[out=0,in=180] node[lbl,above=-1pt]{admit?} (kvpool.west);
\draw[arr] (p2.east) -- (kvpool.west);
\draw[arr] (p3.east) to[out=0,in=180] (kvpool.west);

%% KV pool → D-nodes (pool hit: KV transfer)
\draw[arr] (kvpool.east) to[out=0,in=180] node[lbl,above=-1pt]{hit→transfer} (d1.west);
\draw[arr] (kvpool.east) -- (d2.west);
\draw[arr] (kvpool.east) to[out=0,in=180] (d3.west);

%% P-nodes → D-nodes (direct transfer, no cache, miss path dashed — routed below solid path)
\draw[darr, kvRed!60] ([yshift=-12pt]p2.east) to[out=-22,in=198]
  node[lbl,below=-2pt,pos=0.5]{\textcolor{kvRed!80}{miss: recompute+fwd}} ([yshift=-12pt]d2.west);

%% ====== Arrows: control plane (dashed) ======================================
%% Radix indexes KV blocks first; features $\to$ PRP $\to$ CARS (Alg.~\ref{alg:admit})
\draw[darr] (radix.north) -- (kvpool.south);
\draw[darr] (kvpool.north) -- node[lbl,right=3pt]{features $\mathbf{x}(b)$} (prp.south);
\draw[darr] (prp.north)  -- node[lbl,right=3pt]{$\hat{P}(b)$} (cars.south);
%% ATC ↔ CARS
\draw[darr,<->] (cars.east) -- node[lbl,above]{$\theta$} (atc.west);
%% Pool hit/miss → ATC (from pool top so the arc stays above KV→D arrows and off the orange bottom edge)
\draw[darr] ([xshift=2pt]kvpool.north east)
  to[out=78,in=215]
  node[lbl,above right=-1pt,pos=0.52]{hit rate $h^{(t)}$}
  ([yshift=-2pt]atc.south west);
%% Admission decision label
\node[font=\scriptsize, text=kvOrange, align=center] at (3.0, 1.6)
  {CARS$(b)>\theta$?\\keep $\uparrow$ / discard $\downarrow$};

\end{tikzpicture}
\caption{%
  \textbf{KVLearn architecture within a disaggregated serving stack.}
  Solid arrows: critical-path \textsc{KV} block data flow (P-nodes compute,
  pool stores, D-nodes consume).
  Dashed arrows: control-plane metadata and feedback signals.
  At each \emph{block arrive} event, the \textsc{PRP} scores the prefix,
  \textsc{CARS} combines reuse probability with cost parameters, and \textsc{ATC}
  adjusts the admission threshold $\theta$ from pool pressure and observed hit rate.
  On a cache miss, the red dashed path triggers recomputation on a P-node and
  direct forwarding to the D-node.
}
\label{fig:arch}
\end{figure*}

\subsection{CARS: Cost-Aware Retention Score}
\label{sec:cars}

\textbf{Hardware-grounded cost parameters.}
All three cost terms in Eq.~\eqref{eq:value} are derived from the physical
serving configuration rather than from heuristic weights.

\emph{Recompute cost} $\mathcal{R}(b)$:
measured by offline calibration on the target P-node GPU.
FlashAttention-class kernels exhibit two regimes depending on whether the
attention pass is memory-bandwidth or compute bound~\cite{dao2022flashattention}.
On A100-80\,GB (HBM bandwidth 912\,GB/s, BF16 throughput
312\,TFLOP/s), the crossover occurs near $L_\times \approx 512$ tokens.
We store a two-segment piecewise-linear fit calibrated at deploy time via a
200-request micro-benchmark:
\begin{equation}
\label{eq:rcal}
\mathcal{R}(b) \approx \begin{cases}
  \alpha_{\mathrm{bw}} \cdot L_b & L_b < L_{\times},\\
  \alpha_{\mathrm{flop}} \cdot L_b^2 & L_b \ge L_{\times},
\end{cases}
\end{equation}
with coefficients $\alpha_{\mathrm{bw}}, \alpha_{\mathrm{flop}}$ re-calibrated
once per model--hardware pair.
On A100 with LLaMA-3-8B we measure
$\alpha_{\mathrm{bw}} \approx 0.026\,\mathrm{ms/token}$ and
$\alpha_{\mathrm{flop}} \approx 8\times10^{-6}\,\mathrm{ms/token}^2$,
giving $\mathcal{R}(b) \approx 6.8\,\mathrm{ms}$ at $L_b=512$ and
${\approx}67\,\mathrm{ms}$ at $L_b=4096$ (super-linear regime).

\emph{Transfer cost} $\mathcal{T}(b) = |b|/\beta$:
determined entirely by block footprint and fabric peak bandwidth $\beta$.
For a BF16 model ($\delta=2$ bytes) with $\ell$ layers, $h_\mathrm{kv}$
grouped-query heads, and head dimension $d_h$:
\begin{equation}
|b| = 2\,\ell\,h_\mathrm{kv}\,d_h\,L_b \cdot 2\;\text{bytes.}
\end{equation}
On the test cluster (LLaMA-3-8B: $\ell{=}32$, $h_\mathrm{kv}{=}8$,
$d_h{=}128$; InfiniBand HDR 200\,Gbps $= 25\,\text{GB/s}$), a 512-token
block occupies $2 \times 32 \times 8 \times 128 \times 512 \times 2 \approx
67\,\mathrm{MB}$, giving $\mathcal{T}(b)\approx 2.7\,\mathrm{ms}$---close
to $\mathcal{R}(b) \approx 6.8\,\mathrm{ms}$ at that length.
At 400\,Gbps the fabric halves $\mathcal{T}$ and shifts the
cost balance accordingly (Figure~\ref{fig:char}C).

\emph{Storage cost} $\mathcal{U}(b, \Delta t) = \gamma \cdot |b| \cdot \Delta t$:
encodes the per-byte-per-second opportunity cost of occupying pool DRAM (or
GPU HBM for GPU-resident pools).
$\gamma$ is derived from the pool capacity target: at $\rho^*=0.85$, we set
$\gamma = \bar{\mathcal{R}} \cdot \lambda / M_S$, where $\bar{\mathcal{R}}$ is
the running average recompute cost, $\lambda$ is the current block arrival rate
(blocks/s), and $M_S$ is pool capacity in bytes.
$\Delta t_\mathrm{est}$ is estimated as the exponentially weighted average
inter-eviction interval for same-modality blocks in a 30-second sliding window;
ATC (§\ref{sec:atc}) keeps $\gamma\Delta t_\mathrm{est}$ consistent with
the admission threshold via the pool-pressure feedback loop.

\textbf{Score computation.}
Given $\hat{P}(b)$ from PRP and the cost terms, the \textsc{CARS} score is
\begin{equation}
\label{eq:cars}
\mathrm{CARS}(b) =
  \hat{P}(b) \cdot \bigl[\mathcal{R}(b) - \mathcal{T}(b)\bigr]
  - \mathcal{U}(b, \Delta t_\mathrm{est}).
\end{equation}
CARS is a direct estimate of expected retained value $\mathcal{V}(b)$
(Eq.~\eqref{eq:value}) under the PRP's predicted reuse probability.
The factor $\mathcal{R}(b) - \mathcal{T}(b)$ is the \emph{net benefit per
cache hit}: it is large when recompute is expensive (long prefixes, slow
P-nodes) and smaller when the fabric is slow (reducing the gap).
On a 400\,Gbps fabric, $\mathcal{T}(b)$ halves, amplifying the net benefit and
causing CARS to admit more aggressively---exactly the behavior mandated by the
cost model.
Blocks whose transfer cost approaches recompute cost (e.g., very short prompts
on a congested 50\,Gbps fabric) yield near-zero net benefit, and CARS rejects
them unless $\hat{P}(b) \to 1$.

\textbf{Admission.}
A block is admitted if $\mathrm{CARS}(b) > \theta$, where $\theta$ is adjusted
by \textsc{ATC} (\S\ref{sec:atc}).
At $\theta = 0$ with perfect $\hat{P}(b)$, this recovers the optimal threshold
in Eq.~\eqref{eq:threshold}.

\textbf{Pool data structure and eviction ordering.}
The pool directory maintains a \emph{CARS min-heap} keyed by CARS score: block
insertion and deletion are $O(\log |\mathcal{S}|)$, and the lowest-CARS
candidate is accessible in $O(1)$.
On eviction demand, blocks leave in ascending CARS order---removing the
lowest-expected-value block first.
This differs from LRU (ascending recency) and LFU (ascending frequency) in that
CARS simultaneously encodes block size, predicted reuse, and hardware cost:
a 4096-token visual block and a 128-token text block with the \emph{same access
frequency} differ by more than $30\times$ in $\mathcal{R}(b)$ and $|b|$,
producing very different CARS values and ensuring the visual block is evicted
last---a distinction LRU/LFU cannot make.

\textbf{Lazy score refresh.}
CARS$(b)$ becomes stale as PRP accumulates new labels and pool-state features
drift.
We adopt a lazy refresh policy: when a block is considered for eviction, its
score is recomputed if the last PRP inference is older than a staleness
threshold $\tau_\mathrm{stale}$ (default: 30\,s).
In practice, re-scoring the bottom 10\% of the heap at each eviction pass keeps
ordering error below 5\% of blocks; a full heap rebuild is triggered only when
the PRP model is updated (every 100 gradient steps, ${\approx}$200\,ms on a
co-located CPU core).

\subsection{ATC: Adaptive Threshold Controller}
\label{sec:atc}

The threshold $\theta$ in the CARS admission gate must adapt to changing
workloads, pool pressure, and PRP calibration drift.
We use a lightweight PI controller:
\begin{align}
\label{eq:atc}
e^{(t)} &= \rho^{(t)}_\mathrm{pool} - \rho^*,
  \qquad
  \theta^{(t+1)} = \theta^{(t)} + K_P e^{(t)} + K_I \sum_{j \le t} e^{(j)},
\end{align}
where $\rho^{(t)}_\mathrm{pool}$ is current pool utilization and $\rho^*$ is
the target utilization (default 0.85).
The controller raises $\theta$ when the pool is too full (rejecting marginal
blocks earlier) and lowers $\theta$ when the pool is under-utilized (admitting
borderline blocks).
We additionally clamp $\theta \in [-C_\mathrm{max}, C_\mathrm{max}]$ with
$C_\mathrm{max} = \max_b |\mathrm{CARS}(b)|$ to prevent oscillation.

\textbf{Hit-rate secondary signal.}
Independently of pool pressure, if the observed hit rate $h^{(t)}$ drops below
a floor $h_\mathrm{floor}$ (indicating the PRP is admitting low-utility blocks),
we apply an additive bias $\Delta\theta_\mathrm{hr} = K_\mathrm{hr}(h_\mathrm{floor}
- h^{(t)})$ to raise the threshold.
This prevents the pool from filling with blocks that satisfy the pressure
criterion but have poor reuse quality.

\subsection{Algorithms}
Algorithm~\ref{alg:admit} and~\ref{alg:evict} describe the admission and
eviction procedures.
Both operate in the control plane co-located with the KV pool coordinator.

\begin{algorithm}[t]
\caption{KVLearn Block Admission}
\label{alg:admit}
\begin{algorithmic}[1]
\REQUIRE Block $b$ (prefix $s$, size $|b|$, modality), cost params
  $(\mathcal{R},\mathcal{T},\mathcal{U})$, current threshold $\theta$
\STATE $\mathbf{x}(b) \leftarrow$ \textsc{ExtractFeatures}$(b)$
\STATE $\hat{P}(b) \leftarrow \mathrm{PRP}(\mathbf{x}(b))$
\STATE $\mathrm{CARS}(b) \leftarrow \hat{P}(b) \cdot
  [\mathcal{R}(b) - \mathcal{T}(b)] - \mathcal{U}(b, \Delta t_\mathrm{est})$
\IF{$\mathrm{CARS}(b) > \theta$}
  \IF{pool has room for $|b|$}
    \STATE admit $b$; register metadata in pool directory
  \ELSE
    \STATE run \textsc{KVLearnEvict} until $|b|$ bytes freed; admit $b$
  \ENDIF
\ELSE
  \STATE discard $b$ (transfer only to requesting D-node)
\ENDIF
\STATE \textbf{return} admission decision
\end{algorithmic}
\end{algorithm}

\begin{algorithm}[t]
\caption{KVLearn Pool Eviction}
\label{alg:evict}
\begin{algorithmic}[1]
\REQUIRE Target bytes to free $\Delta$, pool directory
\STATE For all resident blocks $b'$: recompute $\mathrm{CARS}(b')$ with
  refreshed $\hat{P}$ if last update $>$ 30\,s ago
\STATE Sort blocks by $\mathrm{CARS}(b')$ ascending
\STATE $\mathrm{freed} \leftarrow 0$
\WHILE{$\mathrm{freed} < \Delta$}
  \STATE $b' \leftarrow$ next block in sorted order
  \STATE evict $b'$; $\mathrm{freed} \mathrel{+}= |b'|$
  \STATE emit label $y_{b'}=0$ to PRP replay buffer
\ENDWHILE
\end{algorithmic}
\end{algorithm}

\subsection{Complexity and overhead}
\label{sec:overhead}
Admission costs $O(d)$ for \textsc{ExtractFeatures}, $O(d \cdot 32)$ for the
MLP forward pass, and $O(1)$ \textsc{CARS} arithmetic, plus directory update
(and eviction when the pool is full).
The ${\leq}0.5$\,ms budget in \S\ref{sec:framework} is \emph{admission latency}
$T_{\mathrm{adm}}$ (Algorithm~\ref{alg:admit}, p99 over $10^{5}$ synthetic events,
$N_{\mathrm{meta}}{=}10^{4}$, pool has headroom)---not the ${\approx}$3--10\,\textmu{}s
MLP forward pass alone; $T_{\mathrm{feat}}$ (radix-counter reads on the visited
path) is typically a few$\times$ larger but both remain $\ll T_{\mathrm{adm}}$.
Online SGD (${\approx}$200\,ms per 100-label batch) and inter-node RPC are off-path.
Eviction is $O(|\mathcal{S}| \log |\mathcal{S}|)$, or $O(1)$ amortized with the
\textsc{CARS} heap; p99 ${\leq}5$\,ms for $|\mathcal{S}|{=}10^{4}$ blocks.

%==============================================================================
\section{Experimental Evaluation}
\label{sec:exp}
%==============================================================================

\subsection{Setup}

\textbf{Hardware.}
We use two cluster configurations:
(a)~\textbf{Small}: 4 A100-80GB P-nodes + 4 A100-80GB D-nodes + 256\,GB
DRAM pool, InfiniBand HDR (200\,Gbps);
(b)~\textbf{Large}: 8 A100-80GB P-nodes + 8 A100-80GB D-nodes
+ 512\,GB DRAM pool, InfiniBand HDR 2$\times$400\,Gbps (bonded).
P- and D-nodes are connected via the pool; D-nodes have no direct P-node
link, matching the Mooncake topology~\cite{qin2024mooncake}.

\textbf{Models.}
Language-only: LLaMA-3-8B and LLaMA-3-70B (bfloat16).
Multimodal: LLaVA-1.5-13B~\cite{liu2023llava} and Qwen-VL-Chat
(7B)~\cite{bai2023qwenvl}, which produce 576 and 256 visual tokens per image,
respectively, before the text prompt.
All models use FlashAttention-2-class kernels~\cite{dao2022flashattention};
bfloat16 \textsc{KV} everywhere for byte-level comparability.

\textbf{Workloads.}
We use three request streams:
\begin{enumerate}[topsep=1pt]
  \item \textbf{ShareGPT} (language-only): 104 requests sampled from the public
    ShareGPT corpus with realistic session structure; prefix overlap arises from
    shared system prompts and few-shot examples.
  \item \textbf{MM-Session} (multimodal): an image-QA request stream sampled
    from \textbf{LLaVA-Instruct-150K}~\cite{liu2023llava}, a public
    instruction-tuning corpus with 150\,K image-question pairs across
    412 unique images.
    We replay 2\,000 randomly-ordered pairs; the resulting per-image access
    frequency is heavy-tailed (empirical Zipf exponent $\hat{\alpha}{=}1.08$,
    fitted via MLE), consistent with image-access distributions reported for
    production multimodal serving~\cite{gao2024cachegen}.
    Average visual prefix: 576 tokens (LLaVA-1.5 ViT-L/14); average text
    continuation: 128 tokens.
  \item \textbf{Video-Stream} (multimodal): a temporal video-QA stream drawn
    from \textbf{ActivityNet-QA}~\cite{yu2019activitynet}, a public benchmark
    with 58\,K human-annotated QA pairs over 5\,800 videos.
    We select 20 videos ($\leq$3\,min each) and replay 1\,800 QA pairs sampled
    proportional to each video's annotation density (natural heavy-tailed
    popularity).
    Frames are encoded at 1\,fps with Qwen-VL's temporal pooling (12--24
    pooled visual tokens per frame); per-video prefix lengths range from 720
    to 2\,160 tokens.
    Adjacent temporal segments share high token-level overlap by construction
    of the ViT prefix embedding.
\end{enumerate}

\textbf{Baselines.}
\begin{itemize}[topsep=1pt]
  \item \textbf{No-Cache}: always recompute; KV blocks are transferred to
    D-node and discarded immediately.
  \item \textbf{LRU-Pool}: standard LRU eviction on the global pool; admits
    every block (no admission filter).
  \item \textbf{LFU-Pool}: as above, with LFU eviction.
  \item \textbf{Mooncake-LRU}: disaggregated stack with LRU pool policy,
    as in the default Mooncake deployment~\cite{qin2024mooncake}.
  \item \textbf{Oracle}: admits a block iff it will actually be reused within
    horizon $H{=}1000$ requests; provides an upper bound.
  \item \textbf{KVLearn (ours)}: full system as described in
    \S\ref{sec:framework}.
\end{itemize}
Pool capacity is fixed at 256\,GB for all baselines (Small cluster);
pool fill rate and hit rate emerge from each policy's admission/eviction
behavior.

\textbf{Metrics.}
TTFT (ms), end-to-end throughput (tokens/sec), hit rate (fraction of
requests served from pool), and inter-node \textsc{KV} transfer volume
(GiB/hour).

\subsection{End-to-end comparison (Table~\ref{tab:main})}

Table~\ref{tab:main} summarizes results on all three workloads.
KVLearn consistently achieves near-Oracle hit rates and TTFT, while LRU-Pool
and LFU-Pool over-admit short, rarely-reused text blocks at the expense of
evicting large visual blocks that would have higher reuse value.
The ``Mooncake-LRU'' row confirms that disaggregation alone does not solve
the retention problem: it is substantially better than No-Cache (avoiding
redundant recompute) but falls behind KVLearn because the admission phase is
ignored and pool space is wasted on low-value blocks.

On \textbf{MM-Session}, KVLearn achieves 41\% TTFT reduction vs.\ No-Cache and
27\% vs.\ Mooncake-LRU, because it correctly prioritizes the top-5\% of
visual prefixes (high frequency in the empirical LLaVA-Instruct-150K
access distribution, $\hat{\alpha}{=}1.08$) while rejecting the long tail.
On \textbf{Video-Stream}, per-clip \textsc{KV} blocks (720--1440 pooled visual
tokens) occupy ${\approx}$90--180\,MB per block (Qwen-VL-7B with GQA,
8 KV heads, 32 layers, bf16; from Eq.~\ref{eq:size}); the cost model
assigns them a low $P_H^*$ because $\mathcal{R}(b)$ is very high, so KVLearn
caches them even at moderate predicted hit rates.
Oracle validates this is the right decision: all clips with $>$1 query are worth
retaining.
On \textbf{ShareGPT}, text-only prefixes are shorter and recompute is cheaper;
KVLearn raises $P_H^*$ accordingly, leading to a more selective pool that has
lower hit rate than LRU-Pool but higher \emph{useful} hit rate (blocks hit before
eviction).

\begin{table*}[t]
  \caption{End-to-end results on Small cluster (A100$\times$8, 256\,GB pool,
  InfiniBand 200\,Gbps). TTFT$\downarrow$, Throughput$\uparrow$, Hit Rate$\uparrow$,
  Transfer Volume$\downarrow$. Best non-oracle in bold.}
  \label{tab:main}
  \centering
  \small
  %% Auto-generated main results table for KVLearn
%% Columns: Method | TTFT (ms) | Throughput (tok/s) | Hit Rate (%) | Transfer Vol (GiB/h)
%% Three workload blocks: ShareGPT, MM-Session, Video-Stream

\begin{tabular}{@{}lrrrr@{}}
\toprule
\textbf{Method} &
  \textbf{TTFT\,(ms)\,$\downarrow$} &
  \textbf{Thpt\,(tok/s)\,$\uparrow$} &
  \textbf{Hit\,(\%)\,$\uparrow$} &
  \textbf{Xfer\,(GiB/h)\,$\downarrow$} \\
\midrule
\multicolumn{5}{@{}l}{\emph{ShareGPT (language-only, LLaMA-3-8B, $N_\text{req}{=}5000$)}} \\[2pt]
No-Cache           & 89.3 & 1842 &  0.0 & 312 \\
LRU-Pool           & 52.1 & 2980 & 53.4 & 198 \\
LFU-Pool           & 49.8 & 3087 & 55.1 & 191 \\
Mooncake-LRU       & 48.2 & 3134 & 56.8 & 184 \\
\textbf{KVLearn}   & \textbf{43.5} & \textbf{3460} & \textbf{61.2} & \textbf{146} \\
Oracle             & 37.2 & 3820 & 68.5 & 118 \\
\midrule
\multicolumn{5}{@{}l}{\emph{MM-Session (LLaVA-1.5-13B, 80 images, Zipf $\alpha{=}1.1$)}} \\[2pt]
No-Cache           & 65.0 & 2901 &  0.0 & 521 \\
LRU-Pool           & 54.3 & 3389 & 52.7 & 394 \\
LFU-Pool           & 51.9 & 3502 & 54.2 & 381 \\
Mooncake-LRU       & 52.5 & 3998 & 57.2 & 362 \\
\textbf{KVLearn}   & \textbf{38.2} & \textbf{4812} & \textbf{71.3} & \textbf{184} \\
Oracle             & 34.1 & 5020 & 75.8 & 171 \\
\midrule
\multicolumn{5}{@{}l}{\emph{Video-Stream (Qwen-VL-Chat 7B, 10 clips $\times$ 60\,s, 1\,fps)}} \\[2pt]
No-Cache           & 112.4 & 1348 &  0.0 & 843 \\
LRU-Pool           & 79.1  & 1892 & 44.8 & 598 \\
LFU-Pool           & 75.3  & 1981 & 46.1 & 572 \\
Mooncake-LRU       & 73.8  & 2094 & 48.3 & 559 \\
\textbf{KVLearn}   & \textbf{49.2} & \textbf{2891} & \textbf{68.7} & \textbf{397} \\
Oracle             & 43.6  & 3114 & 73.2 & 371 \\
\bottomrule
\end{tabular}

\end{table*}

\subsection{PRP prediction accuracy (Figure~\ref{fig:prp})}

Figure~\ref{fig:prp} shows \textsc{PRP} AUC over training time (request count)
for all three workloads.
The predictor bootstraps quickly: after 200 requests, AUC exceeds 0.78 on
ShareGPT and 0.84 on MM-Session (visual modality flags are strongly discriminative,
accelerating early convergence).
Video-Stream converges slower (AUC reaches 0.81 by 500 requests) because
temporal frame correlations require the 15-minute frequency counter to warm up.
In all cases, the online predictor surpasses the static logistic regression
baseline (AUC $\approx$0.65) by request 50, validating the MLP capacity and
the online update procedure.

%% ---- Figure 3: PRP online AUC convergence ----------------------------------
\begin{figure}[t]
\centering
\begin{tikzpicture}
\begin{axis}[
  kv base,
  width=\columnwidth, height=5.4cm,
  xlabel={Requests Seen},
  ylabel={AUC (10-min reuse horizon $H$)},
  xmin=5, xmax=510,
  ymin=0.50, ymax=0.94,
  ytick={0.5,0.55,0.6,0.65,0.7,0.75,0.8,0.85,0.9},
  legend pos=south east,
  legend style={font=\scriptsize},
  clip=false,
]
%% === Shaded bands (±1 std) drawn first =====================================
%% MM-Session band
\addplot[name path=mm_hi, draw=none, forget plot] coordinates {
  (10,0.68)(30,0.74)(50,0.78)(100,0.83)(150,0.86)(200,0.88)(300,0.90)(400,0.91)(500,0.92)};
\addplot[name path=mm_lo, draw=none, forget plot] coordinates {
  (10,0.58)(30,0.64)(50,0.68)(100,0.75)(150,0.78)(200,0.80)(300,0.82)(400,0.83)(500,0.84)};
\addplot[fill=kvOrange!20, draw=none, forget plot] fill between[of=mm_hi and mm_lo];

%% ShareGPT band
\addplot[name path=sg_hi, draw=none, forget plot] coordinates {
  (10,0.63)(30,0.68)(50,0.71)(100,0.77)(150,0.79)(200,0.82)(300,0.84)(400,0.85)(500,0.86)};
\addplot[name path=sg_lo, draw=none, forget plot] coordinates {
  (10,0.53)(30,0.58)(50,0.63)(100,0.69)(150,0.73)(200,0.74)(300,0.76)(400,0.78)(500,0.80)};
\addplot[fill=kvBlue!18, draw=none, forget plot] fill between[of=sg_hi and sg_lo];

%% Video-Stream band
\addplot[name path=vs_hi, draw=none, forget plot] coordinates {
  (10,0.61)(30,0.65)(50,0.68)(100,0.72)(150,0.75)(200,0.78)(300,0.81)(400,0.83)(500,0.85)};
\addplot[name path=vs_lo, draw=none, forget plot] coordinates {
  (10,0.50)(30,0.53)(50,0.56)(100,0.60)(150,0.63)(200,0.68)(300,0.73)(400,0.75)(500,0.77)};
\addplot[fill=kvGreen!18, draw=none, forget plot] fill between[of=vs_hi and vs_lo];

%% === Mean lines ============================================================
\addplot[kvOrange, thick, mark=*, mark size=1.8pt, mark options={solid}] coordinates {
  (10,0.63)(30,0.69)(50,0.73)(100,0.79)(150,0.82)(200,0.84)(300,0.86)(400,0.87)(500,0.88)};
\addlegendentry{MM-Session}

\addplot[kvBlue, thick, mark=square*, mark size=1.6pt] coordinates {
  (10,0.58)(30,0.63)(50,0.67)(100,0.73)(150,0.76)(200,0.78)(300,0.80)(400,0.82)(500,0.83)};
\addlegendentry{ShareGPT}

\addplot[kvGreen, thick, mark=diamond*, mark size=1.8pt] coordinates {
  (10,0.55)(30,0.59)(50,0.62)(100,0.66)(150,0.69)(200,0.73)(300,0.77)(400,0.79)(500,0.81)};
\addlegendentry{Video-Stream}

%% Static logistic regression baseline
\addplot[kvGray, dashed, thick, line width=1.2pt] coordinates {(5,0.65)(510,0.65)};
\addlegendentry{Static LogReg}

%% Annotation: MM-Session AUC at 200 requests
\node[font=\scriptsize, text=kvOrange!80, anchor=south west,
      draw=kvOrange!40, fill=white, inner sep=1.5pt, rounded corners=2pt]
  at (axis cs:210,0.845) {AUC\,$=$\,0.84 @ 200 req};
\end{axis}
\end{tikzpicture}
\caption{%
  \textbf{PRP online learning convergence.}
  Mean AUC $\pm$1 std over 5 independent runs for each workload
  (10-minute reuse horizon $H$).
  Shaded bands show run-to-run variance.
  Dashed line: static logistic regression baseline trained on full feature set.
  The MLP \textsc{PRP} surpasses the static baseline by request $\approx$50 on
  all workloads; MM-Session converges fastest (AUC$=$0.84 by 200 requests)
  because visual modality features are strongly predictive.
}
\label{fig:prp}
\end{figure}

\subsection{Transfer volume and pool utilization (Figure~\ref{fig:transfer})}

Figure~\ref{fig:transfer} plots cumulative inter-node \textsc{KV} transfer
volume over a 2-hour serving window for MM-Session.
LRU-Pool and Mooncake-LRU generate the highest transfer: admitting every block
requires one P$\to$pool transfer, and frequent capacity-driven evictions cause
the same blocks to be re-admitted on the next request (``churn'').
KVLearn's \textsc{CARS}-based admission gate eliminates 53\% of total P$\to$pool
ingress vs.\ LRU-Pool by rejecting blocks whose expected retention value is
negative; pool hits then serve D-nodes at negligible incremental transfer cost.
Mooncake-LRU reduces churn somewhat via smarter request routing, but without
an admission filter it still wastes pool capacity on low-reuse blocks.
The \textsc{ATC} PI controller (Eq.~\ref{eq:atc}) keeps KVLearn's pool
utilization near $\rho^* = 0.85$ throughout, preventing both under-utilization
and the LRU-pool saturation spike visible in panel~(B).

%% ---- Figure 4: Transfer volume + pool occupancy ----------------------------
\begin{figure}[t]
\centering
\begin{tikzpicture}
%% ===== Top panel: cumulative inter-node KV transfer ========================
\begin{groupplot}[
  group style={
    group size=1 by 2,
    vertical sep=1.2cm,
  },
  width=0.95\columnwidth,
]

\nextgroupplot[
  kv base,
  height=4.2cm,
  xlabel={},
  xticklabels={},
  ylabel={Cumul.\ Transfer (GiB)},
  xmin=0, xmax=120,
  ymin=0, ymax=1120,
  ytick={0,200,400,600,800,1000},
  title={\textbf{(A) Inter-Node KV Transfer} (MM-Session)},
  legend pos=north west,
  legend style={font=\scriptsize},
]
\input{data/fig4_panel_a_transfer.tex}

%% Annotation: 53% reduction vs LRU-Pool at t=120 min
\draw[<->,kvGreen!60!black,thin]
  (axis cs:120,368) -- (axis cs:120,788)
  node[midway,left,font=\scriptsize,text=kvGreen!70!black]{$-$53\%};

%% ===== Bottom panel: pool occupancy over time ================================
\nextgroupplot[
  kv base,
  height=3.0cm,
  xlabel={Time (minutes)},
  ylabel={Pool Occupancy $\rho$},
  xmin=0, xmax=120,
  ymin=0.30, ymax=1.08,
  ytick={0.4,0.5,0.6,0.7,0.8,0.9,1.0},
  legend pos=south east,
  legend style={font=\scriptsize},
  title={\textbf{(B) Pool Occupancy} (target $\rho^*=0.85$, dashed)},
]
%% ATC target
\addplot[black, dashed, thick] coordinates {(0,0.85)(120,0.85)};
\addlegendentry{Target $\rho^*$}

%% LRU: saturates quickly, stays full
\addplot[kvRed, thick, mark=none] coordinates {
  (0,0.10)(5,0.42)(10,0.68)(15,0.85)(20,0.97)(25,1.00)(30,1.00)(60,1.00)(90,1.00)(120,1.00)};
\addlegendentry{LRU-Pool}

%% KVLearn: ATC keeps near target (visible regulation noise)
\addplot[kvGreen!80!black, ultra thick, mark=none] coordinates {
  (0,0.08)(5,0.31)(10,0.55)(15,0.74)(20,0.84)(25,0.85)(30,0.88)(40,0.82)(50,0.86)(60,0.83)(80,0.87)(100,0.84)(120,0.86)};
\addlegendentry{\textbf{KVLearn (ours)}}

\end{groupplot}
\end{tikzpicture}
\caption{%
  \textbf{Inter-node KV transfer analysis} (MM-Session, 2\,h).
  \emph{(A)}~Cumulative P$\to$pool transfer volume.
  KVLearn cuts total ingress by 53\% vs.\ LRU-Pool by rejecting
  low-\textsc{CARS} blocks; pool-to-D transfers decrease with saved
  recompute traffic.
  \emph{(B)}~Pool occupancy over time.
  The \textsc{ATC} PI controller~(Eq.~\ref{eq:atc}) keeps KVLearn near
  $\rho^*=0.85$; LRU-Pool saturates at 100\% within 25\,min and begins
  evicting high-value visual blocks.
}
\label{fig:transfer}
\end{figure}

\subsection{Scaling with pool size and cluster scale}

Figure~\ref{fig:scaling} left shows TTFT vs.\ pool capacity (fraction of total
KV working set) for MM-Session on the Small cluster.
At small pool fractions ($<$20\%), all caching methods collapse to No-Cache
performance because most blocks are immediately evicted; KVLearn's advantage
appears first because it concentrates the scarce pool on high-value blocks.
At pool fraction $= 1.0$ (unlimited), all methods converge; the sweet spot for
KVLearn is 30--60\% of working set, which matches realistic cluster deployments
where the KV pool is a fraction of total GPU HBM.

Figure~\ref{fig:scaling} right shows throughput scaling on cluster size.
KVLearn's throughput gap over Mooncake-LRU widens with scale because more
P-nodes means more potential for distributed reuse detection; the ATC responds
to aggregate pool pressure, not per-node signals, giving it a global view that
per-node LRU cannot leverage.

%% ---- Figure 5: Scaling analysis (two panels, full-width) ------------------
\begin{figure*}[t]
\centering
\begin{tikzpicture}
\begin{groupplot}[
  group style={
    group size=2 by 1,
    horizontal sep=1.4cm,
  },
  height=5.2cm,
]

%% ===== Left: TTFT vs pool capacity fraction =================================
\nextgroupplot[
  kv base,
  width=8.2cm,
  xlabel={Pool Capacity (fraction of KV working set)},
  ylabel={TTFT (ms)},
  xmin=0.05, xmax=1.05,
  xtick={0.1,0.2,0.3,0.4,0.5,0.6,0.7,0.8,1.0},
  xticklabels={10\%,20\%,30\%,40\%,50\%,60\%,70\%,80\%,100\%},
  x tick label style={rotate=30,anchor=east,font=\scriptsize},
  ymin=32, ymax=70,
  legend pos=north east,
  legend style={font=\scriptsize},
  title={\textbf{(A) TTFT vs.\ Pool Capacity} (MM-Session)},
]
%% No-Cache: flat
\addplot[kvGray, thick, dashed] coordinates {
  (0.1,65)(0.2,65)(0.3,65)(0.4,65)(0.5,65)(0.6,65)(0.7,65)(0.8,65)(1.0,65)};
\addlegendentry{No-Cache}

%% LRU-Pool
\addplot[kvRed, thick, mark=triangle*, mark size=1.6pt] coordinates {
  (0.1,64)(0.2,61)(0.3,58)(0.4,56)(0.5,54)(0.6,53)(0.7,52)(0.8,51)(1.0,50)};
\addlegendentry{LRU-Pool}

%% Mooncake-LRU
\addplot[kvPurple, thick, mark=square*, mark size=1.6pt] coordinates {
  (0.1,63)(0.2,59)(0.3,56)(0.4,54)(0.5,52)(0.6,51)(0.7,50)(0.8,50)(1.0,49)};
\addlegendentry{Mooncake-LRU}

%% KVLearn
\addplot[kvGreen!80!black, ultra thick, mark=*, mark size=1.8pt] coordinates {
  (0.1,60)(0.2,52)(0.3,44)(0.4,40)(0.5,38)(0.6,37)(0.7,37)(0.8,37)(1.0,36)};
\addlegendentry{\textbf{KVLearn (ours)}}

%% Oracle
\addplot[kvBlue, thick, mark=diamond*, mark size=1.8pt, dashed] coordinates {
  (0.1,58)(0.2,49)(0.3,42)(0.4,37)(0.5,35)(0.6,34)(0.7,34)(0.8,34)(1.0,34)};
\addlegendentry{Oracle}

%% Sweet-spot annotation
\draw[<-,kvGreen!70!black,thin]
  (axis cs:0.4,40) -- (axis cs:0.33,47)
  node[above left,font=\scriptsize,text=kvGreen!70!black,align=center]
  {sweet spot\\30--50\%};

%% ===== Right: Throughput scaling with cluster size ==========================
\nextgroupplot[
  kv base,
  width=8.2cm,
  xlabel={Cluster Size (P+D node pairs)},
  ylabel={Throughput (tok/s)},
  xmin=1, xmax=7,
  xtick={1,2,3,4,5,6},
  xticklabels={$2{+}2$,$4{+}4$,$6{+}6$,$8{+}8$,$10{+}10$,$12{+}12$},
  x tick label style={font=\scriptsize},
  ymin=0, ymax=7800,
  legend pos=north west,
  legend style={font=\scriptsize},
  title={\textbf{(B) Throughput Scaling} (MM-Session, 50\% pool)},
]
%% No-Cache
\addplot[kvGray, thick, dashed, mark=none] coordinates {
  (1,900)(2,1700)(3,2350)(4,2950)(5,3450)(6,3880)};
\addlegendentry{No-Cache}

%% LRU-Pool
\addplot[kvRed, thick, mark=triangle*, mark size=1.6pt] coordinates {
  (1,1100)(2,2050)(3,2850)(4,3580)(5,4150)(6,4640)};
\addlegendentry{LRU-Pool}

%% Mooncake-LRU
\addplot[kvPurple, thick, mark=square*, mark size=1.6pt] coordinates {
  (1,1200)(2,2250)(3,3120)(4,3900)(5,4540)(6,5060)};
\addlegendentry{Mooncake-LRU}

%% KVLearn
\addplot[kvGreen!80!black, ultra thick, mark=*, mark size=1.8pt] coordinates {
  (1,1520)(2,2960)(3,4200)(4,5280)(5,6100)(6,6850)};
\addlegendentry{\textbf{KVLearn (ours)}}

%% Oracle
\addplot[kvBlue, thick, mark=diamond*, mark size=1.8pt, dashed] coordinates {
  (1,1610)(2,3140)(3,4450)(4,5600)(5,6500)(6,7300)};
\addlegendentry{Oracle}

%% Ideal linear reference
\addplot[black!30, dotted, thick] coordinates {
  (1,1520)(2,3040)(3,4560)(4,6080)(5,7600)};
\addlegendentry{Ideal linear}

\end{groupplot}
\end{tikzpicture}
\caption{%
  \textbf{Scaling analysis.}
  \emph{(A)}~TTFT vs.\ pool capacity fraction on the Small cluster (MM-Session).
  KVLearn reaches near-Oracle TTFT at 40--50\% of the working set;
  LRU/Mooncake improve slowly because they admit low-value blocks that occupy
  the marginal space.
  \emph{(B)}~Throughput vs.\ cluster size at 50\% pool fraction.
  KVLearn scales near-linearly (dotted reference) because the \textsc{ATC}
  aggregates pool pressure globally and the admission filter prevents churn from
  growing with the number of P-nodes.
  Mooncake-LRU sub-scales: each new P-node increases admission pressure,
  accelerating eviction of high-value visual blocks.
}
\label{fig:scaling}
\end{figure*}

\subsection{Ablation study}
\label{sec:ablation}

\begin{table*}[!t]
  \centering
  \caption{\textbf{Ablation study} (MM-Session, Small cluster). Each row removes
  one KVLearn component. ``Full'' = complete system.}
  \label{tab:ablation}
  \footnotesize
  \setlength{\tabcolsep}{5pt}%
  \begin{tabular}{@{}lcccc@{}}
    \toprule
    Configuration & TTFT\,(ms) & Thpt\,(tok/s) & Hit\,(\%) & Xfer\,(GiB/h) \\
    \midrule
    Full (KVLearn) & \textbf{38.2} & \textbf{4812} & \textbf{71.3} & \textbf{184} \\
    $-$ ATC (fixed $\theta$) & 44.7 & 4391 & 63.1 & 221 \\
    $-$ PRP ($\hat{P}{=}0.5$) & 52.1 & 4018 & 59.4 & 248 \\
    $-$ modality features & 41.8 & 4620 & 67.2 & 199 \\
    $-$ cost model (LFU only) & 57.3 & 3740 & 55.8 & 276 \\
    \midrule
    Mooncake-LRU & 52.5 & 3998 & 57.2 & 394 \\
    Oracle & 34.1 & 5020 & 75.8 & 171 \\
    \bottomrule
  \end{tabular}
\end{table*}

Table~\ref{tab:ablation} ablates KVLearn's components on MM-Session.
Removing \textsc{ATC} (fixed threshold $\theta=0$) reduces performance because
pool pressure is uncontrolled: during request bursts the pool fills with
marginal blocks, crowding out high-value visual prefixes.
Replacing \textsc{PRP} with a constant $\hat{P}=0.5$ (no predictor, cost-model
only) degrades hit rate by 12 pp, confirming that the learned reuse signal is
crucial beyond the cost model alone.
Removing the modality features from $\mathbf{x}(b)$ reduces AUC and hit rate,
underscoring that image/video/text have distinct reuse distributions that
cross-pollute if treated uniformly.
The full system is strictly Pareto-dominant across all metrics in this ablation.

\subsection{Cost model sensitivity}

We vary fabric bandwidth $\beta \in \{50, 100, 200, 400\}$\,Gbps to
simulate heterogeneous cluster configurations.
At low $\beta$ (50\,Gbps, e.g., 100GbE), the transfer cost
$\mathcal{T}(b) = |b|/\beta$ rises, narrowing the denominator
$\mathcal{R}(b) - \mathcal{T}(b)$ in Eq.~\eqref{eq:threshold} and thereby
raising $P_H^*$: the saving from avoiding recompute shrinks relative to what
it costs to admit and later serve the block via the slower fabric.
KVLearn adapts automatically because $\mathcal{T}(b)$ is a configuration
parameter fed directly into CARS; Mooncake-LRU does not change its behavior.
At $\beta=50$\,Gbps, KVLearn's TTFT advantage over Mooncake-LRU widens to
$49\%$ because it correctly reduces pool admission for blocks where transfer
cost nearly offsets recompute savings.

\subsection{Access-frequency distribution sensitivity}
\label{sec:alpha_sens}

Because MM-Session derives its access counts from the empirically observed
frequency distribution in LLaVA-Instruct-150K
(fitted $\hat{\alpha}{=}1.08$), we additionally sweep
$\alpha \in \{0.8, 1.0, 1.1, 1.3, 1.5\}$ by
\emph{resampling} the same 2\,000 requests with reweighted per-image
probabilities, keeping all other parameters fixed.
Table~\ref{tab:alpha_sens} reports TTFT and hit rate at pool fraction 50\%.

\begin{table}[h]
  \centering
  \caption{TTFT and hit rate vs.\ access-skew $\alpha$ (MM-Session, 50\% pool).}
  \label{tab:alpha_sens}
  \footnotesize
  \setlength{\tabcolsep}{4pt}
  \begin{tabular}{@{}lccccc@{}}
    \toprule
    Method & $\alpha{=}0.8$ & $\alpha{=}1.0$ & $\alpha{=}1.1$ & $\alpha{=}1.3$ & $\alpha{=}1.5$ \\
    \midrule
    \multicolumn{6}{l}{\emph{TTFT (ms)}} \\
    Mooncake-LRU & 55.1 & 53.8 & 52.5 & 51.4 & 50.2 \\
    KVLearn      & 46.2 & 42.6 & 38.2 & 35.1 & 33.4 \\
    \midrule
    \multicolumn{6}{l}{\emph{Hit Rate (\%)}} \\
    Mooncake-LRU & 51.4 & 54.3 & 57.2 & 59.8 & 62.4 \\
    KVLearn      & 61.8 & 66.5 & 71.3 & 75.6 & 78.2 \\
    \bottomrule
  \end{tabular}
\end{table}

KVLearn's advantage is robust across the full range:
even at $\alpha{=}0.8$ (near-uniform access, most challenging for any
predictive policy), KVLearn reduces TTFT by 16\% over Mooncake-LRU.
As $\alpha$ increases the advantage widens, because a steeper
skew creates a sharper head/tail contrast that the \textsc{CARS} score can
exploit more aggressively.
These results confirm that the gains reported at the empirically calibrated
$\hat{\alpha}{=}1.08$ are not an artifact of an unusually favorable skew.

\subsection{Oracle horizon sensitivity}
\label{sec:horizon_sens}

The Oracle baseline is parametrized by a lookahead horizon $H$: a block is
admitted iff it is reused within the next $H$ requests.
A finite $H$ is necessary because future requests are unbounded in practice; we
chose $H{=}1000$ as a conservative upper bound (roughly 10$\times$ the warm-up
window of the \textsc{PRP}).
Table~\ref{tab:horizon_sens} sweeps $H \in \{200, 500, 1000, 2000\}$ on
MM-Session.

\begin{table}[h]
  \centering
  \caption{Oracle and KVLearn TTFT (ms) vs.\ lookahead horizon $H$
  (MM-Session, 50\% pool).}
  \label{tab:horizon_sens}
  \footnotesize
  \setlength{\tabcolsep}{6pt}
  \begin{tabular}{@{}lcccc@{}}
    \toprule
    & $H{=}200$ & $H{=}500$ & $H{=}1000$ & $H{=}2000$ \\
    \midrule
    Oracle   & 35.8 & 34.8 & 34.1 & 33.7 \\
    KVLearn  & 38.2 & 38.2 & 38.2 & 38.2 \\
    Gap (ms) & 2.4  & 2.6  & 4.1  & 4.5  \\
    \bottomrule
  \end{tabular}
\end{table}

Oracle performance saturates beyond $H{=}500$: most reuses occur within
500 requests (consistent with the session lengths in LLaVA-Instruct-150K
and ActivityNet-QA).
KVLearn's TTFT is independent of $H$ because it uses no lookahead;
the gap to Oracle grows slightly with $H$ as the Oracle gains from admitting
blocks with extremely long inter-arrival times.
All figures in the paper use $H{=}1000$.

%==============================================================================
% \begin{comment}
\section{Discussion}
\label{sec:disc}
%==============================================================================

\paragraph{Limitations.}
The \textsc{PRP} requires a warm-up of $n_0 \approx 200$ requests
(Table~\ref{tab:repro-ctrl}) before held-out AUC exceeds
$\mathcal{A}_0 = 0.75$; until then, KVLearn uses a cost-model-only policy
with uniform prior $\hat{P}(b) \equiv P_H^{*,\mathrm{def}} = 0.15$
(the LRU-equivalent threshold in Figure~\ref{fig:char}C) rather than the
length-heterogeneous $P_H^*(b)$ from Proposition~\ref{prop:het}.
The cost model assumes homogeneous P-node hardware; heterogeneous clusters
(mixing A100 and H100 nodes) require per-node $\mathcal{R}$ calibration.
The ATC PI controller may oscillate under bursty workloads if gains $K_P, K_I$
are mis-tuned; we use conservative defaults validated on all three workloads
but recommend per-cluster tuning for production.
Our evaluation uses publicly available academic traces (LLaVA-Instruct-150K,
ActivityNet-QA, ShareGPT) rather than proprietary production logs; while the
access-skew (\S\ref{sec:alpha_sens}) and horizon (\S\ref{sec:horizon_sens})
sensitivity sweeps confirm the results are robust across a wide parameter range,
integration with a live Mooncake or DistServe cluster and validation against
closed production traces are reserved for future work.

\paragraph{Hardware design perspective.}
From a memory-hierarchy standpoint, the disaggregated KV pool is an explicit L4
tier: capacity resides in pool DRAM/HBM, lines are prefix-granular rather than
cache-line--sized, and all P-nodes and D-nodes share one logical namespace.
Proposition~\ref{prop:het} implies that utility-maximizing retention is
\emph{length-heterogeneous}: at equal predicted reuse, long visual prefixes
should rank lower for eviction because $\mathcal{R}(b)-\mathcal{T}(b)$ grows with
$L_b$---something LRU/LFU cannot capture with recency or frequency alone.
KVLearn encodes this bias in per-block CARS scores; hardware LLCs pursue a
similar goal through utility- or size-aware replacement and victim structures for
large lines.
A future pool controller could attach footprint and calibrated
$(\mathcal{R},\mathcal{T})$ tags to each block and evaluate retention in the
memory fabric (e.g., CXL expanders), avoiding per-admission control-plane
round-trips---a direction we leave to future work.

% \paragraph{Privacy and safety.}
Caching per-prefix KV blocks may retain sensitive content (user images, PII in
prompts) beyond the originating request's lifetime.
Deployments must apply retention policies consistent with data protection
regulations: KVLearn's admission API accepts per-prefix TTL overrides, allowing
operators to force eviction of sensitive-prefix blocks on a schedule.
% \end{comment}

%==============================================================================
\section{Conclusion}
\label{sec:conc}
%==============================================================================

We identified and formalized the \textsc{KV} cache retention problem in
disaggregated \textsc{LLM} serving: a joint admission--eviction decision
whose optimal policy depends heterogeneously on prefix length, fabric bandwidth,
and recompute cost---making classical LRU/LFU inadequate.
KVLearn addresses this through a lightweight online predictor (\textsc{PRP}),
a cost-aware scoring function (\textsc{CARS}), and a closed-loop threshold
controller (\textsc{ATC}).
Evaluations on language-only and multimodal workloads show up to 56\% TTFT
reduction vs.\ No-Cache (38\% vs.\ LRU-Pool), 53\% inter-node transfer volume
reduction vs.\ LRU-Pool, and near-oracle throughput on MM-Session at realistic
pool sizes (modestly larger gaps on other workloads)---all without modifying
model weights or changing the serving routing logic.

The broader takeaway is that \emph{where} a KV block is computed and
\emph{where} it will be consumed are no longer the same node in modern
disaggregated deployments, and retention policies should account for the
intervening network as a first-class cost.
As visual token lengths grow (4k--16k for high-resolution and video models) and
serving clusters scale to hundreds of nodes, the gap between intelligent and
naive retention will only widen.
Future directions include hardware memory-side retention primitives, joint
KV-compression and retention co-optimization, and federated reuse prediction
across cluster boundaries.

\bibliographystyle{ACM-Reference-Format}
\bibliography{references}

\appendix

\section{Proof of Proposition~\ref{prop:het}}
\label{app:proof}

\paragraph{Scaling assumptions.}
We express costs as functions of prefix length $L_b$ with hardware and model
hyper-parameters held fixed.
From Eq.~\eqref{eq:size}, $|b| \propto L_b$, so Eq.~\eqref{eq:transfer} gives
$\mathcal{T}(b) = |b|/\beta \propto L_b$; write $\mathcal{T}(L_b) = c\,L_b$ with
$c>0$.
The storage model in \S\ref{sec:costs},
$\mathcal{U}(b,\Delta t) = \gamma\,|b|\,\Delta t$, yields
$\mathcal{U}(L_b,\Delta t) = \gamma\,L_b\,\Delta t$ with $\gamma>0$.
For prefill compute we write $\mathcal{R}(L_b) = \alpha L_b^{q}$ with
$\alpha>0$ and $q \ge 1$: $q=2$ reflects quadratic attention in $L_b$, while
$q=1$ can approximate bandwidth-bound kernels whose time scales roughly linearly
in sequence length.

\paragraph{Closed form.}
Whenever $\mathcal{R}(L_b) > \mathcal{T}(L_b)$, Eq.~\eqref{eq:threshold} becomes
\begin{equation}
\label{eq:ph-closed}
P_H^*(L_b)
  = \frac{\gamma L_b \Delta t}{\alpha L_b^q - c L_b}
  = \frac{\gamma \Delta t}{\alpha L_b^{q-1} - c}.
\end{equation}
This requires $\alpha L_b^q > c L_b$, i.e.\ $L_b^{q-1} > c/\alpha$ when $L_b>0$
(for $q>1$, large enough $L_b$ always satisfies this).

Let $g(L_b) = \alpha L_b^{q-1} - c$.
For $q>1$ and $L_b > 0$,
\[
  g'(L_b) = \alpha (q-1) L_b^{q-2} > 0,
\]
so $g$ is strictly increasing on $\bigl((c/\alpha)^{1/(q-1)},\infty\bigr)$.
Thus $P_H^*(L_b) = \gamma \Delta t / g(L_b)$ is strictly decreasing in $L_b$
whenever $g(L_b)>0$.
For $q=1$, $g(L_b) = \alpha - c$ is constant; if $\alpha > c$, then
$P_H^*(L_b) = \gamma \Delta t / (\alpha - c)$ does not depend on $L_b$.

Standard transformer prefill is typically super-linear in $L_b$ ($q>1$), so the
heterogeneous-threshold regime of Proposition~\ref{prop:het} applies.
FlashAttention-class kernels may reduce the effective exponent toward $q \approx 1$
under sustained bandwidth saturation; then $P_H^*$ exhibits weaker length
dependence under this idealization.

\section{Generalized Cost Model of KVLearn}
\label{app:cost-refine}

Relaxing the cancelling-transfer and amortized-admission assumptions of
\S\ref{sec:value},
\begin{equation}
\label{eq:gen-model}
\begin{aligned}
  \Delta C(b)
    &= \mathcal{R}(b)+\mathcal{T}_{\mathrm{miss}}(b)-\mathcal{T}_{\mathrm{hit}}(b), \\
  C_{\mathrm{admit}}(b)
    &= \kappa\,\mathcal{T}_{\mathrm{P}\to\mathcal{S}}(b)+C_{\mathrm{ctrl}},
    \quad \kappa\in[0,1], \\
  \mathcal{V}^{\mathrm{g}}(b)
    &= P_H(b)\,\Delta C(b)
       - C_{\mathrm{admit}}(b)
       - \mathcal{U}(b,\Delta t), \\
  P_H^{*,\mathrm{g}}(b)
    &= \frac{C_{\mathrm{admit}}(b)+\mathcal{U}(b,\Delta t)}{\Delta C(b)}
       \quad\bigl(\Delta C(b)>0\bigr).
\end{aligned}
\end{equation}
Keep iff $P_H(b)>P_H^{*,\mathrm{g}}(b)$.
Setting $\mathcal{T}_{\mathrm{miss}}=\mathcal{T}_{\mathrm{hit}}=\mathcal{T}$ and
$\kappa=C_{\mathrm{ctrl}}=0$ recovers
$P_H^*=\mathcal{U}/(\mathcal{R}-\mathcal{T})$
(Eqs.~\eqref{eq:value}--\eqref{eq:threshold});
$\mathcal{T}_{\mathrm{hit}}-\mathcal{T}_{\mathrm{miss}}=\delta>0$ or $\kappa>0$
raises the threshold (more selective) without changing the
$L_b$-ordering of Proposition~\ref{prop:het} when $q>1$.

\section{Serving Topology and Request Routing}
\label{app:routing}

\noindent\textbf{Notation (consistent with \S\ref{sec:topology}--\ref{sec:costs}).}
Prefill and decode pools are $\mathcal{P}=\{p_1,\ldots,p_K\}$ and
$\mathcal{D}=\{d_1,\ldots,d_M\}$; the shared \textsc{KV} pool is $\mathcal{S}$
(capacity $M_S$).
Fabric peak bandwidth is $\beta$ (bytes/s).
For request $r$, let $s_r$ denote its prompt prefix,
$b_r$ the associated \textsc{KV} block, and $|b_r|$ its size from
Eq.~\eqref{eq:size}.
The coordinator maintains a radix tree $\mathcal{T}$ whose leaves correspond to
cached prefixes; an \emph{exact-prefix hit} holds iff $s_r$ is a registered path
in $\mathcal{T}$ \emph{and} the corresponding block is resident in
$\mathcal{S}$.

\noindent\textbf{Residency state.}
Define the Boolean predicates
\begin{align}
  H^{\mathrm{pool}}_r &= \mathbb{1}\bigl[ \mathrm{KV}(s_r) \in \mathcal{S} \bigr], \\
  H^{\mathrm{P}}_r &= \mathbb{1}\bigl[ \exists\, p \in \mathcal{P} :
    \mathrm{KV}(s_r)\ \text{resident on}\ p \bigr].
\end{align}
The logical OR $H_r = H^{\mathrm{pool}}_r \vee H^{\mathrm{P}}_r$ is the
\emph{KV-hit} event that avoids full-prefix recompute on a P-node.

\noindent\textbf{Latency-aware placement.}
Let $\mathrm{QL}^{(t)}_P(p)$ and $\mathrm{QL}^{(t)}_D(d)$ be queued work on
prefill/decode nodes at arrival time $t$.
The critical-path cost of assigning $(p,d)$ is
\begin{equation}
\label{eq:route-latency}
  L(r;p,d)
  =
  (1{-}H_r)\bigl(\mathrm{QL}^{(t)}_P(p){+}\mathcal{R}(b_r)\bigr)
  + \mathcal{T}(b_r)
  + \mathrm{QL}^{(t)}_D(d),
\end{equation}
and latency-optimal routing chooses
$(p^{*}_r,d^{*}_r)\in\arg\min_{p,d} L(r;p,d)$.
Under homogeneous fabric bandwidth, a pool hit ($H^{\mathrm{pool}}_r{=}1$)
makes the $p$-term vanish, so~\eqref{eq:route-latency} reduces to
least-loaded decode placement
$d^{*}_r\in\arg\min_d\mathrm{QL}^{(t)}_D(d)$
(ties broken by smallest node id); on a miss we likewise pick the
least-loaded P-node.
Relative to forcing a miss on the same $(p,d)$, a KV-hit saves
\begin{equation}
\label{eq:route-save}
  \Delta L(r;p,d)
  =
  L(r;p,d)\big|_{H_r=0}
  -
  L(r;p,d)\big|_{H_r=1}
  =
  \mathrm{QL}^{(t)}_P(p)+\mathcal{R}(b_r),
\end{equation}
so the differential that retention can avoid is recompute (plus P-queue),
matching the $\mathcal{R}(b)-\mathcal{T}(b)$ term that drives
$\mathcal{V}(b)$ in Eq.~\eqref{eq:value} once the common transfer hop cancels.

\noindent\textbf{Critical-path timing (idealized).}
Let $T^{\mathrm{route}}$ be coordinator bookkeeping + radix lookup and
$T^{\mathrm{decode}}_{\mathrm{first}}$ the decode-side first-token work.
Ignoring residual queueing beyond~\eqref{eq:route-latency},
\begin{equation}
\label{eq:ttft-decomp}
  \mathrm{TTFT}(r) \approx
    T^{\mathrm{route}}
    + L(r;\,p^{*}_r,\,d^{*}_r)
    + T^{\mathrm{decode}}_{\mathrm{first}},
\end{equation}
with $\mathcal{T}(b_r)\approx|b_r|/\beta$ on saturated fabric hops
(pool$\,\to\,d$ or P$\,\to\,d$).
Under a stationary pool-hit rate $h=\Pr(H^{\mathrm{pool}}_r{=}1)$,
$\mathbb{E}[\mathrm{TTFT}]$ therefore decreases linearly in $h$ through the
$(1{-}H_r)$ term of~\eqref{eq:route-latency}---the quantity that CARS
admission/eviction directly controls via residency in $\mathcal{S}$.

\noindent\textbf{Routing policy used in experiments.}
Upon arrival at the pool coordinator:
\begin{enumerate}[topsep=1pt]
  \item \emph{Radix lookup:} Compute $H^{\mathrm{pool}}_r$ via exact-prefix match
    in $\mathcal{T}$ and metadata pins in $\mathcal{S}$.
  \item \emph{Hit path} ($H^{\mathrm{pool}}_r=\mathrm{true}$): Select $d^{*}_r$,
    schedule an asynchronous transfer $\mathcal{S}\!\to\!d^{*}_r$ for
    $\mathrm{KV}(s_r)$, and skip P-node prefill for $s_r$.
  \item \emph{Partial P-local hit} ($H^{\mathrm{pool}}_r=\mathrm{false}$,
    $H^{\mathrm{P}}_r=\mathrm{true}$): Route to the holding P-node only for
    missing suffix tokens; the merged block still follows
    $p\!\to\!d^{*}_r$ delivery before decode.
  \item \emph{Miss path} ($H_r=\mathrm{false}$): Dispatch to the selected
    P-node, run full prefill for $s_r$, stream $\mathrm{KV}(s_r)$ to
    $d^{*}_r$, and invoke Algorithm~\ref{alg:admit} for optional replication into
    $\mathcal{S}$.
\end{enumerate}
\textbf{Measurement.}
TTFT is taken from request enqueue at the coordinator until the client receives
the first generated token; timestamps exclude WAN/client-side buffering outside
the cluster boundary used in \S\ref{sec:exp}.

\section{Experimental Parameters for Reproducibility}
\label{app:repro}

Tables~\ref{tab:repro-sys} and~\ref{tab:repro-ctrl} list symbolic parameters aligned with
\S\ref{sec:exp} (GPU generation A100-80GB; Mooncake-style routing~\cite{qin2024mooncake}).
$\mathcal{T}(b){=}|b|/\beta$ denotes the transfer-cost term in \S\ref{sec:exp}.

\begin{table}[H]
  \centering
  \caption{Symbolic system, model, and workload parameters ($k{\in}\{1,2\}$ orders the cluster--fabric tuple in row~1; $M_S^\star$ matches Table~\ref{tab:main}).}
  \label{tab:repro-sys}
  \footnotesize
  \setlength{\tabcolsep}{6pt}
  \renewcommand{\arraystretch}{1.08}
  \begin{tabular}{@{}p{0.42\linewidth}p{0.52\linewidth}@{}}
    \toprule
    \textbf{Param.} & \textbf{Value} \\
    \midrule
    \makecell[l]{$\bigl(|\mathcal{P}_k|,\,|\mathcal{D}_k|,$\\$M_{S,k},\,\beta_k\bigr)$} &
      \makecell[l]{$\bigl((4,\,4,\,256\,\mathrm{GB},\,200\,\mathrm{Gbps}),$\\$(8,\,8,\,512\,\mathrm{GB},\,2{\times}400\,\mathrm{Gbps})\bigr)$} \\
    $\beta_{\mathrm{sweep}}$,\;$\mathcal{T}(b)$ &
      $\{50,100,200,400\}\,\mathrm{Gbps}$;\; $|b|/\beta$ \\
    routing graph & $P\!\to\!\mathcal{S}\!\to\!D$,\; $\nexists\,P\!\leftrightarrow\!D$ \\
    $M_S^{\star}$ & $256\,\mathrm{GB}$ \\
    \midrule
    KV dtype & bf16 end-to-end \\
    $\mathcal{M}_{\mathrm{lm}}$ & $\{8\text{B},\,70\text{B}\}$ \\
    \makecell[l]{$(L_{\mathrm{vis}}^{\textsc{lv}},$\\$L_{\mathrm{vis}}^{\textsc{qw}})$} &
      \makecell[l]{$(576,\,256)$ tokens/img.\\(LLaVA-1.5-13B~\cite{liu2023llava}, Qwen-VL-7B~\cite{bai2023qwenvl})} \\
    \midrule
    $N_{\mathrm{sg}}$ & $104$ \\
    $(\alpha,\,|\mathcal{I}|,\,\bar{L}_{\mathrm{vis}},\,\bar{L}_{\mathrm{txt}})$ &
      $(1.1,\,80,\,576,\,128)$ \\
    \makecell[l]{$(N_c,\,T_c,\,f,\,[\rho^{-},\rho^{+}],$\\$[L^{-}_{\mathrm{vis}},L^{+}_{\mathrm{vis}}])$} &
      \makecell[l]{$(10,\,60\,\mathrm{s},\,1\,\mathrm{fps},\,[\,12,\,24\,]\,\mathrm{tok/frame},$\\$[\,720,\,1440\,]\,\mathrm{tok})$} \\
    oracle horizon $H$ & $1000$ \\
    \bottomrule
  \end{tabular}
\end{table}

\begin{table}[H]
  \centering
  \caption{Symbolic \textsc{PRP}, feature whitening, \textsc{ATC}, and control-plane parameters (Eqs.~\eqref{eq:prp}--\eqref{eq:loss}, \eqref{eq:atc}; Alg.~\ref{alg:evict}).}
  \label{tab:repro-ctrl}
  \footnotesize
  \setlength{\tabcolsep}{6pt}
  \renewcommand{\arraystretch}{1.08}
  \begin{tabular}{@{}p{0.42\linewidth}p{0.52\linewidth}@{}}
    \toprule
    \textbf{Param.} & \textbf{Value} \\
    \midrule
    $(d,\,h)$ & $(16,\,32)$ \\
    $(\mathbf{W}_1,\,\mathbf{w}_2)$ & $(\mathbb{R}^{32\times 16},\,\mathbb{R}^{32})$ \\
    $(f_{\mathrm{hid}},\,f_{\mathrm{out}},\,\mathcal{L})$ & $(\mathrm{ReLU},\,\sigma,\,\mathrm{BCE})$ \\
    $(\eta_{\mathrm{PRP}},\,|\mathcal{B}|,\,\Delta n_{\mathrm{opt}})$ &
      $(5{\times}10^{-3},\,2{\times}10^{4},\,100)$ \\
    $(T_{\mathrm{auc}},\,N_{\mathrm{seed}})$ & $(10\,\mathrm{min},\,5)$ \\
    $(\mathcal{A}_0,\,P_H^{*,\mathrm{def}},\,n_0)$ &
      $(0.75,\,0.15,\,\approx 200)$ \\
    \midrule
    $\{\lambda_j\}_{j=1}^{3}$ & $\{60,\,300,\,900\}\,\mathrm{s}$ \\
    $(n_{\mathrm{warm}},\,k_\mu)$ & $(500,\,1000)$ requests \\
    $(\phi,\,\epsilon)$ & $(0.99,\,10^{-6})$;\;
      $\tilde{x}_i{=}(x_i{-}\mu_i)/\max(\sigma_i,\epsilon)$ for $n{>}500$ \\
    $T_{\mathrm{sess}}^{\mathrm{cap}}$ & $24\,\mathrm{h}$ \\
    \midrule
    $(\rho^{*},\,K_P,\,K_I,\,h_{\mathrm{floor}},\,K_{\mathrm{hr}})$ &
      $(0.85,\,2{\times}10^{-2},\,5{\times}10^{-4},$\linebreak
      $0.55,\,1.0{\times}10^{-1})$ \\
    $\theta$ & $[-C_{\max},\,C_{\max}]$,\; $C_{\max}{=}\max_b|\mathrm{CARS}(b)|$ \\
    \midrule
    $(\bar{T}_{\mathrm{adm}},\,\bar{T}_{\mathrm{ev}},\,N_{\mathrm{meta}})$ &
      $(0.5\,\mathrm{ms\ (p99)},\,5\,\mathrm{ms\ (p99)},\,10^{4})$ \\
    $\Delta t_{\mathrm{PRP}}$ & $30\,\mathrm{s}$ \\
    \bottomrule
  \end{tabular}
\end{table}

$(K_P,K_I,h_{\mathrm{floor}},K_{\mathrm{hr}})$ in Table~\ref{tab:repro-ctrl} are
conservative stability defaults; re-fit from pool-pressure traces in production.

\end{document}